\documentclass[10pt,twocolumn,letterpaper]{article}

\usepackage{cvpr}              % To produce the CAMERA-READY version
\usepackage{amsthm}
\theoremstyle{plain}

\newtheorem{lemma}{Lemma}

\newtheorem{corollary}{Corollary}
\usepackage{amsmath}
\usepackage{bm}
\usepackage{pifont}
\newcommand{\cmark}{\ding{51}}% checkmark
\newcommand{\xmark}{\ding{55}}% cross
\usepackage{amsmath}
\usepackage{amssymb}
\usepackage{array} 
\usepackage{makecell}
\usepackage{algorithm}
\usepackage{algpseudocode} % This is part of the algorithmicx package bundle

\usepackage{float}
\usepackage{placeins}

 \usepackage{booktabs} % Required for \toprule, \midrule, \bottomrule
\usepackage{booktabs}      % For professional-looking rules (\toprule, \midrule, \bottomrule)
\usepackage{siunitx}       % For aligning numbers in columns (S column type)
\usepackage{graphicx}      % For \resizebox and \rotatebox
\usepackage{multirow}      % For spanning rows (\multirow)
\usepackage[table]{xcolor} % For \rowcolor and \textcolor (the [table] option is important)
\usepackage{cuted}
\usepackage{stfloats}

\usepackage{placeins}

\definecolor{cvprblue}{rgb}{0.21,0.49,0.74}
\usepackage[pagebackref,breaklinks,colorlinks,allcolors=cvprblue]{hyperref}

\def\paperID{****} % *** Enter the Paper ID here
\def\confName{CVPR}
\def\confYear{2026}

\title{DSCA: Dynamic Subspace Concept Alignment for Lifelong VLM Editing}
\author{
Gyanendra Das$^{1}$ \quad Sai Satyam Jena$^{1}$\\
$^{1}$ Zynix AI, FL, USA\\
{\tt\small \{gyanendra, sai\}@zynix.ai}
}

\begin{document}
\maketitle
\begin{abstract}
Model editing aims to update knowledge to add new concepts and change relevant information without retraining. Lifelong editing is a challenging task, prone to disrupting previously learned concepts, especially for Vision Language Models (VLMs), because sequential edits can lead to degraded reasoning and cross-modal misalignment. Existing VLM knowledge editing methods based on gated adapters, activation edits, and parameter merging techniques address catastrophic forgetting seen in full fine-tuning; however, they still operate in the shared representation space of the VLM, where concepts are entangled, so edits interfere with other non-relevant concepts. We hypothesize that this instability persists because current methods algorithmically control edits via optimization rather than structurally separating knowledge. We introduce Dynamic Subspace Concept Alignment (DSCA) which by design mitigates this limitation by decomposing the representation space into a set of orthogonal semantic subspaces and proposing edits only in those transformed spaces. These subspaces are obtained through incremental clustering and PCA on joint vision-language representations. This process structurally isolates concepts, enabling precise, non-interfering edits by turning isolation from a soft training objective into an architectural property. The surgical edits are guided by a multi-term loss function for maintaining task fidelity, edit locality, and cross-modal alignment. With the base model frozen, our method achieves 98\% single-edit success, remains over 95\% after 1,000 sequential edits, lowers hallucination by 3-5\%, and achieves the best backward transfer (BWT) scores on continual instruction-tuning benchmarks. Extensive experiments demonstrate DSCA’s state-of-the-art stability and knowledge retention capability in continual lifelong editing across various datasets and benchmarks. 
\end{abstract}

\section{Introduction}
\label{sec:intro}
% \FloatBarrier
\begin{figure}[htbp] % requires \usepackage{float}
    \centering
    \includegraphics[width=\columnwidth]{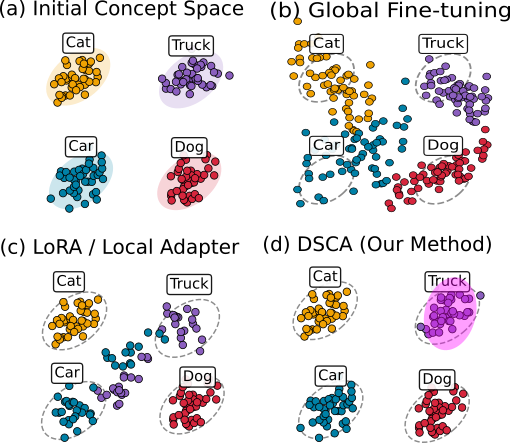}
    \vspace{-2mm}
    \caption{\textbf{Conceptual comparison of knowledge-editing paradigms.}
    (a) The initial concept space where concepts are well-separated.
    (b) Global fine-tuning perturbs the entire representation space, distorting unrelated concepts.
    (c) LoRA / local adapters constrain edits but still produce coupled interference.
    (d) \textbf{DSCA} performs subspace-confined, concept-specific interventions, maintaining isolation and preserving all other concepts.}
    \label{fig:overview}
    \vspace{-3mm}
\end{figure}
Large vision--language models (LVLMs) are increasingly deployed as long-lived systems that interact with users over months or years. In such settings, we cannot treat their knowledge as static; facts change, user-specific preferences evolve and model errors must be corrected without retraining from scratch. Humans learn new concepts in a modular fashion; learning about the ``Tesla Cybertruck'' does not alter one's concept of a ``Road''.
%This ability hinges on a cognitive architecture that allows for surgical,context-specific updates %without requiring a global rewrite.
This localized knowledge updating contrasts sharply with current VLMs, whose knowledge resides in a high-dimensional representation manifold where edits tend to cause coupled interference across concepts (Fig.~\ref{fig:overview}). Consequently, attempts to teach VLMs new concepts often trigger global perturbations. Full fine-tuning drastically alters the manifold's geometry, destroying the carefully learned relational structure between existing concepts and leading to catastrophic forgetting. Lighter-weight methods attempt to isolate edits from interfering with unrelated concepts in two main ways. Methods like LiveEdit~\cite{chen2025lifelong} and DualEdit~\cite{shi2024dualedit} use routing logic to activate small and selective ``expert'' modules for specific inputs. Others, such as PAM~\cite{sokar2025continual} and ConDU~\cite{gao2025enhanced}, learn parameters for a new task and then carefully merge them back into the base model's weights. However, both strategies still apply updates to the model parameters. Any alteration to model weights, even to a small subset, inevitably perturbs the shared representation space of the VLM. Thus, an edit intended for one concept can unintentionally shift the position of nearby representations, subtly distorting the model's understanding of unrelated but similar concepts.

Our core conviction is that this issue is not an algorithmic flaw to be patched, but a fundamental architectural mismatch. If knowledge in the real world is compositional and interventions are local, then edits should occur in the respective concept subspaces of the VLM rather than in the shared representation manifold. This vision requires a knowledge-editing mechanism whose architecture is \textit{plastic}, allowing the base model’s conceptual space to be extended and refined as new information is acquired.

This paper introduces \textbf{Dynamic Subspace Concept Alignment (DSCA)}, a framework built from the ground up on this principle. Rather than altering the model’s core weights, DSCA performs precise modifications directly within the relevant semantic subspace, with basis-level control \cite{liu2025unlocking}. Instead of treating the VLM’s representation space as a monolithic entity, DSCA decomposes it into a dynamic collection of orthogonal subspaces, each housing a distinct concept. This design creates structural \textquotedblleft firewalls\textquotedblright\ that prevent edits to one concept from interfering with others. This architectural shift enables models to adapt to new out-of-distribution data in a structured, robust, and human-like manner.
Our key contributions can be summarized as follows:
\begin{enumerate}
    \item \textbf{A novel editing architecture via subspace decomposition.} We introduce a method that structurally partitions the VLM’s representation space into a dynamic set of orthogonal semantic subspaces. This principled separation ensures edits are isolated by construction, eliminating cross-concept interference.
    \item \textbf{State-of-the-art reliability in lifelong learning.} DSCA demonstrates superior performance on both single-edit and sequential editing benchmarks. Notably, it maintains exceptional reliability and near-perfect locality locality after 1,000 sequential edits, proving its robustness in long-term scenarios where existing methods typically suffer catastrophic failure.
    \item \textbf{Enhancement of foundational VLM capabilities.} Our framework rigorously safeguards the base model, post-editing DSCA not only preserves performance on standard benchmarks (VQA-v2\cite{balanced_vqa_v2}, MME\cite{fu2023mme}) but also improves generalization and reduces hallucination rates by 3–5\% compared to existing editors.
    \item \textbf{A scalable and efficient intervention mechanism.} We present an efficient system that decouples rapid, task-specific learning from slower, data-driven structural refinement of concept subspaces. This design enables continuous assimilation of new information with minimal inference overhead, making DSCA practical for real-world model evolution.
\end{enumerate}

\section{Related Works}
\label{sec:RelatedWorks}
Continual learning for Vision-Language Models (VLMs) faces unique challenges; degraded cross-modal alignment, interference in shared pathways, and loss of zero-shot generalization \cite{liu2025continual}. Three main approaches have emerged: data replay, regularization, and architectural adaptation. We address limitations of the first two and advance the third with a novel activation-space intervention that achieves strong subspace-level architectural isolation. Our DSCA framework belongs to this architectural adaptation family, operating entirely in activation space rather than modifying base parameters.

\noindent\textbf{Multi-Modal Replay.} Replay methods revisit past data to prevent forgetting. Explicit methods store raw samples \cite{zhang2023vqacl}, while implicit methods use generative models to create synthetic samples \cite{yan2022generative, Frascaroli_2024_BMVC}, avoiding privacy issues and reducing storage. However, the computational cost of training and sampling from generative models limits scalability.

\noindent\textbf{Cross-Modal Regularization.} Regularization methods add constraints to protect existing knowledge without storing data. C-CLIP \cite{liu2025cclip} preserves embedding geometry, ZSCL \cite{Zheng_2023_ICCV} maintains similarity distributions, DualTeacher \cite{yu2024select} uses knowledge distillation, and Mod-X \cite{pmlr-v202-ni23c} regularizes similarity matrices. These are efficient but act as "soft" constraints that cannot guarantee architectural isolation, especially for related concepts.

\noindent\textbf{Parameter-Efficient Adaptation (PEA).} PEA methods freeze the base VLM and add minimal new parameters to limit forgetting. This paradigm has evolved from direct parameter modifications to activation-space interventions.

\noindent\textit{1) \textbf{Direct Parameter Modification.}} Methods insert lightweight modules into the VLM. Some merge task-specific LoRA \cite{hu2021lora} modules (PAM \cite{sokar2025continual}) or dynamically combine them during inference (CoDyRA \cite{lu2024adaptive}). Mixture-of-Experts approaches use learned gating to activate specific adapters, as in MoE-Adapters \cite{yu2024boosting} and DualEdit \cite{shi2024dualedit}. CLAP4CLIP \cite{jha2024clap4clip} uses probabilistic adapters to model task-specific distributions. LiveEdit \cite{chen2025lifelong} combines low-rank MoE with two-stage routing for selective edits.

\noindent\textit{\textbf{2) Canonical Model Editing.}} Within the broader model adaptation literature, a parallel line of work in large language models focuses on directly modifying model weights to encode factual knowledge. ROME \cite{meng2022locating} and MEMIT \cite{meng2023memit} update the MLP weights of specific layers to insert or correct facts without retraining. Gradient-based and memory-based editors such as MEND~\cite{mitchell2022fast}, SERAC~\cite{mitchell2022memory}, LTE~\cite{jiang2024learning}, and VisEdit~\cite{chen2025attribution} similarly operate in parameter space or external memories, and we show in Sec.~\ref{sec:experiments} that their performance degrades under long multimodal edit sequences.

\noindent\textit{\textbf{3) Modular Activation-Space Intervention.}} This paradigm manipulates a model's computational graph at inference time by altering its activations(ReFT\cite{wu2024reft} for LLM knowledge editing). We extend this philosophy from LLMs to Vision-Language Models (VLMs), introducing further modifications to handle multimodal representations and address the open challenge of ensuring structural isolation when multiple interventions are applied concurrently. Its limitation, as shown by \cite{liu2025unlocking}, is that this uniform update struggles to achieve both successful editing and locality simultaneously. BaFT \cite{liu2025unlocking} offers a more precise solution by making the intervention non-linear and input-dependent. By adaptively determining the update's magnitude along each basis direction of the subspace, BaFT can tailor the edit for each specific input, significantly improving the editing-locality trade-off. We extend this philosophy from LLMs to Vision-Language Models (VLMs), introducing further modifications to handle multimodal representations(Sec.\ref{sec:HierarchicalGating}) and address the open challenge of ensuring structural isolation when multiple interventions are applied concurrently.(Sec.~\ref{sec:concept_partitioning}, \ref{sec:dsam})

\noindent\textbf{Discussion.} DSCA complements these approaches by introducing \textit{architectural orthogonality at the representation level} (Sec.~\ref{sec:dsam}), achieving subspace-level isolation that bridges activation-space precision with structural modularity.
    
\section{Methodology}
\label{sec:method}

\subsection{Problem Formulation}
\label{sec:problem_formulation}

Given a pre-trained VLM $\mathcal{M}$ with frozen parameters $\theta$, we focus on the fused cross-modal representation
$\mathbf{h}_f = \mathcal{M}_\theta(I, T) \in \mathbb{R}^{d_f}$ for an image--text pair $(I, T)$, and denote unimodal visual and textual features as $\mathbf{h}_v \in \mathbb{R}^{d_v}$ and $\mathbf{h}_t \in \mathbb{R}^{d_t}$, with fusion
\[
\mathbf{h}_f = \text{Fuse}(\mathbf{h}_v, \mathbf{h}_t).
\]

We consider a sequence of edits $\mathcal{E} = \{E_1, E_2, \dots \}$ applied to a frozen backbone, where each edit $E_i$ specifies a desired change in behavior for a particular input $(I_e, T_e)$ (e.g., updating an outdated fact or adding a new concept).

Our goal is to learn an intervention function $\Psi$ operating directly in the representation space:
\begin{equation}
    \mathbf{h}'_f = \mathbf{h}_f + \Delta \mathbf{h}_f = \Psi(\mathbf{h}_f; \phi),
\end{equation}
where $\phi$ collects the parameters of the editing modules and $\Delta \mathbf{h}_f$ is the proposed update.

The intervention should satisfy three objectives:
\begin{enumerate}
    \item \textbf{Task Fidelity:} Edited representations $\mathbf{h}'_{f,e}$ must yield the desired output for edit samples $(I_e, T_e)$.
    \item \textbf{Locality:} For out-of-scope samples $(I_o, T_o)$, the intervention should be minimal, i.e.,
    $\Psi(\mathbf{h}_{f,o}; \phi) \approx \mathbf{h}_{f,o}$, preserving unrelated knowledge.
    \item \textbf{Cross-Modal Alignment:}
    Updates $\Delta \mathbf{h}_f$ should not disrupt consistency between visual and textual semantics.
\end{enumerate}

DSCA implements $\Psi$ via concept-specific semantic subspaces and sparsely routed modules that operate only where needed.

%--------------------------------------%

\subsection{Online Semantic Partitioning of the Representation Space}
\label{sec:concept_partitioning}

To achieve locality, DSCA first organizes the fused representation space into concept clusters. Incoming fused features are assigned online to an evolving set of $K$ clusters $\{C_1, \dots, C_K\}$.

Each cluster $C_k$ is represented by a fused prototype
$\mathbf{p}_{k,f} \in \mathbb{R}^{d_f}$, updated by an exponential moving average (EMA) over assigned features. Given a new fused representation $\mathbf{h}_f$, we first associate it with the nearest prototype
\begin{equation}
\label{eq:findNearestFusedPrototype}
    j = \arg\min_{k \in \{1,\dots,K\}} \|\mathbf{h}_f - \mathbf{p}_{k,f}\|_2.
\end{equation}

To detect genuinely novel concepts, we maintain per-cluster statistics $(\mu_j, \sigma_j)$ over distances to $\mathbf{p}_{j,f}$ and define a dynamic threshold
\[
d_j = \mu_j + \alpha \cdot \sigma_j,
\]
with sensitivity hyperparameter $\alpha$. If
\[
\left\lVert \mathbf{h}_f - \mathbf{p}_{j,f} \right\rVert_2 > d_j,
\]
we instantiate a new cluster $C_{K+1}$ with $\mathbf{h}_f$ as its first member; otherwise, $\mathbf{h}_f$ is assigned to $C_j$ and both prototype and statistics are updated.
During training, clusters expand as new data arrive; at inference time they are frozen and used as an efficient routing index so that edits are applied only to relevant regions of the representation space.

%--------------------------------------%

\subsection{Dynamic Structured Alignment Modules (DSAMs)}
\label{sec:dsam}

For each concept cluster $C_k$, DSCA attaches a \emph{Dynamic Structured Alignment Module} (DSAM) that proposes a concept-specific update. Each DSAM consists of: (1) a semantic subspace, (2) a learnable transformation within that subspace, and (3) an input-dependent gating mechanism.\\
\textbf{(1) Semantic Subspace $R_k$.}
Performing edits directly in the full $d_f$-dimensional fused space is both expensive and brittle. Instead, we introduce a low-rank subspace for each concept,
\[
R_k \in \mathbb{R}^{r \times d_f}, \quad r \ll d_f,
\]
whose rows span the principal axes of variation for features in $C_k$.
Crucially, $R_k$ is not updated by backpropagation. It is:
\begin{itemize}
    \item initialized via PCA once $C_k$ has accumulated at least $N_{\text{min}}$ samples, and
    \item periodically refined using Incremental PCA on new samples assigned to $C_k$.
\end{itemize}
In practice we apply Incremental PCA on residualized features with respect to earlier subspaces, keeping the family $\{R_k\}$ approximately orthogonal (see Supplementary for details and analysis).\\
\textbf{(2) Learnable Subspace Transformation $(W_k, b_k)$.}
Given $\mathbf{h}_f$, DSAM $k$ predicts target coordinates within its subspace via
a linear transformation
\[
W_k \in \mathbb{R}^{r \times d_f}, \quad b_k \in \mathbb{R}^{r}.
\]
$W_k$ maps the high-dimensional fused feature into the $r$-dimensional semantic basis, and $b_k$ shifts it toward the new conceptual center induced by edit data. The term
$(W_k \mathbf{h}_f + b_k)$ encodes the desired coordinates for $\mathbf{h}_f$ in that subspace.\\
\textbf{(3) Component-wise Gating $\Gamma_k(\mathbf{h}_f)$.}
The raw update proposed by DSAM $k$ is computed as a residual in the subspace and then lifted back to the full space:
\begin{equation}
\label{eq:editindf}
\Delta \mathbf{h}_f = \mathbf{R}_k^\top \big( (\mathbf{W}_k \mathbf{h}_f + \mathbf{b}_k) - \mathbf{R}_k \mathbf{h}_f \big).
\end{equation}
To ensure minimal, input-specific changes, we introduce an input-dependent gating function
$\gamma_k(\mathbf{h}_f) \in [0,1]^{d_f}$, parameterized by a lightweight neural layer:
\begin{equation}
 \label{eq:gamma}
     \gamma_k(\mathbf{h}_f) = \sigma(W_{g,k}\mathbf{h}_f + b_{g,k}),
\end{equation}
where $W_{g,k}$ and $b_{g,k}$ are learnable and $\sigma(\cdot)$ is the element-wise sigmoid. The gating vector defines a diagonal matrix
\begin{equation}
 \label{eq:diag}
     \Gamma_k(\mathbf{h}_f) = \text{diag}(\gamma_k(\mathbf{h}_f)),
\end{equation}
which selectively attenuates dimensions of $\Delta \mathbf{h}_f$. This yields a fine-grained, input-adaptive correction. (We implement $W_{g,k}$ via a low-rank bottleneck for efficiency; see Supplementary) %Sec.~\ref{sec:supp_gating_details}.)

%--------------------------------------%

\subsection{Two-Stage Hierarchical Routing}
\label{sec:HierarchicalGating}

Evaluating all $K$ DSAMs per input would be inefficient. DSCA therefore uses a two-stage hierarchical routing mechanism that first performs coarse visual filtering and then fine-grained routing in the fused space.

For each concept $C_k$ we maintain
a visual prototype $\mathbf{p}_{k,v} \in \mathbb{R}^{d_v}$ (EMA of visual features) and the fused prototype $\mathbf{p}_{k,f}$ from Sec.~\ref{sec:concept_partitioning}.

\paragraph{Stage 1: Coarse visual filtering.}
Given visual features $\mathbf{h}_v$, we compute cosine similarities with all $\{\mathbf{p}_{k,v}\}$ and retain only concepts above a threshold $\tau_{\text{visual}}$:
\begin{equation}
C_{\text{cand}} = \left\{ k \;\middle|\; \frac{\mathbf{h}_v^\top \mathbf{p}_{k,v}}{\|\mathbf{h}_v\|_2 \|\mathbf{p}_{k,v}\|_2} > \tau_{\text{visual}} \right\}.
\label{eq:coarse_filter}
\end{equation}
This provides a small candidate set of potentially relevant DSAMs.

\paragraph{Stage 2: Fused routing.}
For each candidate $k \in C_{\text{cand}}$, we compute a similarity score
$s_k = \text{cosine\_similarity}(\mathbf{h}_f, \mathbf{p}_{k,f})$ and convert these into normalized routing weights via a temperature-controlled softmax:
\begin{equation}
w_k = 
\begin{cases} 
  \frac{\exp(s_k/\tau)}{\sum_{j \in C_{\text{cand}}} \exp(s_j/\tau)} & \text{if } k \in C_{\text{cand}}, \\
  0 & \text{otherwise},
\end{cases}
\label{eq:fine_routing}
\end{equation}
where $\tau$ is a temperature hyperparameter. We denote the corresponding logits $z_k = s_k/\tau$ for $k \in C_{\text{cand}}$ (and $z_k = 0$ otherwise); these logits are also used by the sparsity loss in Sec.~\ref{sec:loss}.
\begin{figure*}[htbp]
\begin{minipage}[t]{0.5\textwidth}
    \begin{algorithm}[H]
    \caption{DSCA Continual Training Loop}
    \label{alg:main_loop}
    \begin{algorithmic}[1]
    \State \textbf{Input:} Pretrained VLM $f_\theta$, edit stream $\mathcal{D}_e$, replay data $\mathcal{D}_o$.
    \State \textbf{Hyperparams:} $\lambda_{\text{align}}, \lambda_{\text{distill}}, \lambda_{\text{sparse}}, r, b, N_{\text{refine}}, N_{\text{min}}$.
    \State \textbf{Initialize:} $K \gets 0$, concept set $\mathcal{C} \gets \emptyset$, DSAMs $\mathcal{M} \gets \emptyset$, buffers $\mathcal{B} \gets \emptyset$.
    \State \textbf{Initialize:} Optimizer for trainable params in $\mathcal{M}$; frozen teacher $f_\theta^{\text{frozen}}$.
    \Statex \hfill $\triangleright$ Main training loop
    \ForAll{training steps}
        \State Sample batches $B_e \subset \mathcal{D}_e$, $B_o \subset \mathcal{D}_o$; $B \gets B_e \cup B_o$.
        \ForAll{samples $(I, T, y)$ in $B$}
            \State $\mathbf{h}_v, \mathbf{h}_t \gets \text{ExtractFeatures}(f_\theta, I, T)$;
            \State $\mathbf{h}_f \gets \text{Fuse}(\mathbf{h}_v, \mathbf{h}_t)$.
            \If{sample $\in B_e$}
                \State \Call{UpdateConceptsAndBuffers}{$\mathbf{h}_f, \mathbf{h}_v$}
            \EndIf
            \State $\mathcal{C}_{\text{cand}} \gets \text{FindCandidates}(\mathbf{h}_v, \{\mathbf{p}_{k,v}\})$.
            \State $\{w_k\} \gets \text{Route}(\mathbf{h}_f, \{\mathbf{p}_{k,f}\}_{k \in \mathcal{C}_{\text{cand}}})$.
            \State $\Delta \mathbf{h}_f \gets \sum_{k \in \mathcal{C}_{\text{cand}} \text{ \& } \text{DSAM}_k \text{ active}} w_k \Psi_k(\mathbf{h}_f)$.
            \State $\mathbf{h}'_f \gets \mathbf{h}_f + \Delta \mathbf{h}_f$.
        \EndFor
        \State Compute $\mathcal{L}_{\text{task}}$, $\mathcal{L}_{\text{align}}$ on $B_e$.
        \State Compute $\mathcal{L}_{\text{cdistill}}$, $\mathcal{L}_{\text{sparse}}$ on $B_o$ and $f_\theta^{\text{frozen}}$.
        \State $L \gets \mathcal{L}_{\text{task}} + \lambda_{\text{align}}\mathcal{L}_{\text{align}} + \lambda_{\text{distill}}\mathcal{L}_{\text{cdistill}} + \lambda_{\text{sparse}}\mathcal{L}_{\text{sparse}}$.
        \State Update trainable DSAM parameters using $\nabla L$.
        \If{global\_step \% $N_{\text{refine}} = 0$}
            \State \Call{RefineSubspaces}{}
        \EndIf
    \EndFor
    \end{algorithmic}
    \end{algorithm}
\end{minipage}\hfill
\begin{minipage}[t]{0.5\textwidth}
    \begin{algorithm}[H]
    \caption{Helper Procedures for DSCA}
    \label{alg:procedures}
    \begin{algorithmic}[1]
    \Procedure{UpdateConceptsAndBuffers}{$\mathbf{h}_f, \mathbf{h}_v$}
        \If{$K > 0$}
            \State $j \gets \arg \min_{k=1..K} \|\mathbf{h}_f - \mathbf{p}_{k,f}\|_2$
        \EndIf
        \If{$K=0$ or $\|\mathbf{h}_f - \mathbf{p}_{j,f}\|_2 > \delta_j$} \Comment{Novel concept}
            \State $K \gets K+1$; let new index be $K$.
            \State Initialize $C_K$ with $\mathbf{p}_{K,f} \gets \mathbf{h}_f$, $\mathbf{p}_{K,v} \gets \mathbf{h}_v$, $\delta_K$.
            \State Initialize inactive $\text{DSAM}_K$ and empty buffer for $C_K$.
            \State Add $\mathbf{h}_f$ to the buffer of $C_K$.
        \Else \Comment{Existing concept}
            \State Update $\mathbf{p}_{j,f}$, $\mathbf{p}_{j,v}$, and $\delta_j$ via EMA.
            \State Add $\mathbf{h}_f$ to the buffer of $C_j$.
        \EndIf
    \EndProcedure
    \Statex
    \Procedure{RefineSubspaces}{}
        \ForAll{concepts $C_k$ in $\mathcal{C}$}
            \If{$\text{DSAM}_k$ active}
                \State Update basis $\mathbf{R}_k$ using Incremental PCA on residualized features.
            \ElsIf{$\text{DSAM}_k$ inactive \& buffer size $\ge N_{\text{min}}$}
                \State Compute initial basis $\mathbf{R}_k$ via PCA.
                \State Mark $\text{DSAM}_k$ as active.
            \EndIf
            \State Clear buffer for $C_k$.
        \EndFor
    \EndProcedure
    \Statex
    \end{algorithmic}
    \end{algorithm}
\end{minipage}
\end{figure*}
%--------------------------------------%

\subsection{Gated Residual Intervention in Semantic Subspaces}
\label{sec:routing_and_intervention}

Given routed weights and DSAM proposals, DSCA aggregates concept-wise interventions into a single residual update.

The intervention produced by DSAM $k$ is the gated subspace update
\begin{equation}
\label{eq:intervention_psi}
\Psi_k(\mathbf{h}_f) = \mathbf{\Gamma}_k(\mathbf{h}_f) \left[ \mathbf{R}_k^\top \left( (\mathbf{W}_k \mathbf{h}_f + \mathbf{b}_k) - \mathbf{R}_k \mathbf{h}_f \right) \right].
\end{equation}

The final edited representation is then
\begin{equation}
\label{eq:final_update}
\mathbf{h}'_f = \mathbf{h}_f + \sum_{k=1}^{K} w_k \Psi_k(\mathbf{h}_f).
\end{equation}

For inputs that clearly correspond to a single concept $j$, the routing distribution is typically peaked ($w_j \approx 1$) and the update is dominated by a single DSAM. For ambiguous cases, multiple DSAMs can contribute, allowing DSCA to blend nearby concept subspaces. Under approximately orthogonal $\{R_k\}$, we show in Supplementary that edits in one subspace have provably bounded interference on others.

%--------------------------------------%

\subsection{Multi-Objective Training Objective}
\label{sec:loss}

We train DSCA to jointly optimize task fidelity, locality, and cross-modal alignment using a composite loss over edit samples $\mathcal{D}_e$ and replay samples $\mathcal{D}_o$.

We use four components:
\begin{itemize}
    \item \textbf{Task fidelity loss} $\mathcal{L}_{\text{task}}$:
    a standard causal language modeling loss on edit samples, encouraging $\mathbf{h}'_{f,e}$ to produce the desired target sequence.
    \item \textbf{Cross-modal alignment loss} $\mathcal{L}_{\text{align}}$:
    a cosine-similarity regularizer aligning the edited fused representation $\mathbf{h}'_{f,e}$ with the unmodified text representation $\mathbf{h}_{t,e}$, anchoring edits in the textual semantic space.
    \item \textbf{Contrastive distillation loss} $\mathcal{L}_{\text{cdistill}}$:
    an InfoNCE-style loss\cite{oord2018representation} that encourages each replay representation $\mathbf{h}'_{f,o}$ to remain closest to its frozen-teacher counterpart, preserving the relational geometry of non-edited samples.
    \item \textbf{Gate sparsity loss} $\mathcal{L}_{\text{sparse}}$:
    an $\ell_1$ penalty on routing logits $\{z_k\}$ for replay samples, discouraging spurious activations of DSAMs for out-of-scope inputs.
\end{itemize}

The overall training objective is a weighted sum:
\begin{equation}
    \mathcal{L} = \mathcal{L}_{\text{task}} + \lambda_{\text{align}}\mathcal{L}_{\text{align}} + \lambda_{\text{distill}}\mathcal{L}_{\text{cdistill}} + \lambda_{\text{sparse}}\mathcal{L}_{\text{sparse}},
    \label{eq:total_loss}
\end{equation}
where the $\lambda$s balance plasticity (successful edits) and stability (knowledge retention). A frozen copy of the backbone VLM provides teacher features for $\mathcal{L}_{\text{cdistill}}$, and we interleave edit and replay batches during training.\\
\textbf{Practical update scheme.}
In practice, DSCA separates fast, gradient-driven parameters from slower, data-driven structural components. For each concept $k$, the intervention parameters $\{W_k, b_k, W_{g,k}, b_{g,k}\}$ (Sec.\ref{sec:dsam}) are updated by backpropagation at every step using the composite loss in Eq.\ref{eq:total_loss}, while the concept prototypes $\{p_{k,v}, p_{k,f}\}$(Sec.\ref{sec:concept_partitioning},\ref{sec:HierarchicalGating}) and semantic bases $R_k$ (Sec.\ref{sec:dsam}) are never optimized by gradient descent. Instead, prototypes are updated via exponential moving average whenever a sample is assigned to cluster $C_k$(Sec.\ref{sec:concept_partitioning}), and each subspace $R_k$ is initialized once $N_{\text{min}}$ samples are available and periodically refined using residualized Incremental PCA over buffered features. This dual-mode update scheme turns the subspaces into a slowly evolving ``knowledge base'' on top of which DSAMs can rapidly adapt to new edits. The overall training procedure is summarized in Algorithm~\ref{alg:main_loop}. 
The helper routines are provided in Algorithm~\ref{alg:procedures}

%--------------------------------------%

\section{Experiments}
\label{sec:experiments}

\subsection{Setup}
We implement DSCA on the LLaVA-1.5-7B model \cite{liu2023improvedllava} and evaluate its performance against state-of-the-art editing methods, including LiveEdit \cite{chen2025lifelong}, DualEdit \cite{shi2024dualedit}, MEND \cite{mitchell2022fast}, LTE \cite{jiang2024learning}, VisEdit \cite{chen2025attribution} and SERAC \cite{mitchell2022memory}. To demonstrate architectural generality, we also apply DSCA to the PaliGemma-3B model \cite{beyer2024paligemma} on the CoIN continual learning benchmark \cite{chen2024coin}, following the PAM protocol \cite{sokar2025continual}. All experiments are conducted on 8$\times$A100 GPUs with mixed precision.
All DSCA-specific hyperparameters (subspace rank $r$, minimum samples per concept $N_{\text{min}}$, refinement interval $N_{\text{refine}}$, routing temperature $\tau$ and loss weights $\lambda_{\text{align}}, \lambda_{\text{distill}}, \lambda_{\text{sparse}}$) are fixed across experiments and summarized in the Supplementary. Unless otherwise noted, higher values indicate better performance for all metrics (less negative BWT corresponds to less forgetting).

\subsection{Evaluation Metrics}
\label{sec:metrics}

For editing benchmarks (E-VQA, E-IC, VLKEB\cite{huang2024vlkeb}) we follow prior work \cite{cheng2023edit,mitchell2022memory} and report \textbf{Reliability (Rel.)}, \textbf{Textual Generalization (T-Gen.)}, \textbf{Visual/Modal Generalization (V-Gen./M-Gen.)}, \textbf{Textual Locality (T-Loc.)}, \textbf{Multimodal Locality (M-Loc.)}, and their mean \textbf{Average (Avg.)}.

For the CoIN benchmark \cite{chen2024coin}, we follow standard continual-learning practice \cite{10.5555/3295222.3295393} and report \textbf{Average Accuracy (ACC)}, \textbf{Backward Transfer (BWT}, \textbf{Forward Transfer (FWT)}, and \textbf{Average Task Accuracy} $A_t$ (plasticity). Formal metric definitions and equations are given in the Supplementary.
\subsection{Core Editing Efficacy}
We first assess single-edit performance (Table~\ref{tab:combined_single_edit}). Across both E-VQA and E-IC benchmarks \cite{cheng2023edit}, DSCA sets a new state-of-the-art, it improves Avg.\ score from 97.84 to 98.50 on E-VQA and from 97.85 to 98.00 on E-IC relative to the strongest baseline, DualEdit \cite{shi2024dualedit}, while maintaining perfect or near-perfect locality.
\begin{table}[htbp]
\centering
\caption{Single-edit results on E-VQA \cite{cheng2023edit} and E-IC \cite{cheng2023edit} (LLaVA-1.5-7B \cite{liu2023improvedllava}). Baselines from DualEdit~\cite{shi2024dualedit}.}
\label{tab:combined_single_edit}
\resizebox{\columnwidth}{!}{%
\begin{tabular}{llcccccc}
\toprule
Dataset & Method & Rel. & T-Gen. & V-Gen. & T-Loc. & M-Loc. & Avg. \\
\midrule
\multirow{5}{*}{E-VQA} & DualEdit\cite{shi2024dualedit} & 96.94 & 96.43 & 96.20 & 100.00 & 99.61 & 97.84 \\
& VisEdit\cite{chen2025attribution} & 95.78 & 94.21 & 94.37 & 100.00 & 91.11 & 95.09 \\
& SERAC \cite{mitchell2022memory} & 82.51 & 81.60 & 80.05 & 100.00 & 57.48 & 80.33 \\
& LTE \cite{jiang2024learning} & 94.16 & 93.54 & 93.06 & 83.76 & 81.65 & 89.23 \\
& MEND\cite{mitchell2022fast} & 92.30 & 92.16 & 92.10 & 90.30 & 81.13 & 89.60 \\
& \textbf{DSCA (ours)} & \textbf{98.12} & \textbf{97.30} & \textbf{97.25} & \textbf{100.00} & \textbf{99.83} & \textbf{98.50} \\
\midrule
\multirow{5}{*}{E-IC} & DualEdit\cite{shi2024dualedit} & 96.76 & 96.52 & 96.24 & 100.00 & 99.74 & 97.85 \\
& VisEdit\cite{chen2025attribution} & 95.06 & 94.87 & 94.35 & 100.00 & 95.23 & 95.90 \\
& SERAC \cite{mitchell2022memory} & 43.08 & 42.37 & 42.85 & 100.00 & 7.63 & 47.19 \\
& LTE \cite{jiang2024learning} & 93.60 & 92.38 & 91.18 & 85.54 & 88.49 & 90.24 \\
& MEND\cite{mitchell2022fast} & 93.76 & 93.46 & 92.14 & 91.60 & 87.59 & 91.71 \\
& \textbf{DSCA (ours)} & \textbf{98.00} & \textbf{97.10} & \textbf{97.02} & \textbf{100.00} & \textbf{99.90} & \textbf{98.00} \\
\bottomrule
\end{tabular}%
}
\end{table}
\subsection{Robustness in a Lifelong Learning Scenario}
We next escalate to lifelong editing, evaluating over $t = 1{,}000$ sequential edits. As shown in Table~\ref{tab:lifelong}, DSCA surpasses LiveEdit \cite{chen2025lifelong} and other baselines on both E-VQA and VLKEB \cite{huang2024vlkeb}. While LiveEdit maintains strong performance, it still exhibits noticeable erosion in reliability (92.93\%) and multimodal locality (96.43\%) on E-VQA after 1{,}000 edits.
DSCA, by contrast, maintains higher reliability (96.85\%) and near-perfect locality (98.20\%) on E-VQA, and achieves similar gains on VLKEB. This suggests that beyond retrieval-based isolation, DSCA's use of approximately orthogonal concept subspaces provides a more principled mechanism for preventing subtle, compounding interference over long edit sequences.
\begin{table}[htbp]
\centering
\caption{Lifelong editing results ($t=1000$ edits) on LLaVA-1.5-7B \cite{liu2023improvedllava}. Baselines from LiveEdit~\cite{chen2025lifelong}.}
\label{tab:lifelong}
\resizebox{\columnwidth}{!}{%
\begin{tabular}{llcccccc}
\toprule
Dataset & Method & Rel. & T-Gen. & M-Gen. & T-Loc. & M-Loc. & Avg. \\
\midrule
\multirow{4}{*}{E-VQA\cite{cheng2023edit}} & LiveEdit\cite{chen2025lifelong} & 92.93 & 90.16 & 84.30 & 100.00 & 96.43 & 92.76 \\
& LTE \cite{jiang2024learning} & 83.93 & 82.55 & 81.34 & 83.97 & 73.09 & 80.98 \\
& MEND\cite{mitchell2022fast} & 0.04 & 0.05 & 0.05 & 0.08 & 0.09 & 0.06 \\
& SERAC \cite{mitchell2022memory} & 85.57 & 75.58 & 82.01 & 62.46 & 15.69 & 64.26 \\
& \textbf{DSCA (ours)} & \textbf{96.85} & \textbf{93.10} & \textbf{88.00} & \textbf{100.00} & \textbf{98.20} & \textbf{95.23} \\
\cmidrule{2-8}
\multirow{4}{*}{VLKEB\cite{huang2024vlkeb}} & LiveEdit\cite{chen2025lifelong} & 92.22 & 83.97 & 82.75 & 100.00 & 100.00 & 91.79 \\
& LTE \cite{jiang2024learning} & 64.51 & 56.26 & 64.80 & 80.85 & 76.52 & 68.59 \\
& MEND\cite{mitchell2022fast} & 0.03 & 0.05 & 0.07 & 0.06 & 0.08 & 0.06 \\
& SERAC \cite{mitchell2022memory} & 60.93 & 56.49 & 60.06 & 52.94 & 15.04 & 49.09 \\
& \textbf{DSCA (ours)} & \textbf{98.10} & \textbf{93.80} & \textbf{89.70} & \textbf{100.00} & \textbf{100.00} & \textbf{96.72} \\
\bottomrule
\end{tabular}
}
\end{table}

\subsection{Continual Learning on CoIN}
This stability is further confirmed on the CoIN benchmark\cite{chen2024coin} (Table~\ref{tab:PAM}). DSCA achieves a Backward Transfer (BWT) of \textbf{-9.37}, indicating minimal forgetting, compared to standard fine-tuning ($-39.51$) and PAM ($-19.45$) \cite{sokar2025continual}. More importantly, this gain in stability is obtained without sacrificing plasticity ($A_t = 76.48$), showing that DSCA achieves a favorable stability--plasticity trade-off.
\begin{table}[htbp]
    \centering
    \caption{Performance on the CoIN benchmark across four metrics (mean$\pm$std over three task orders). Higher is better for all metrics; less negative BWT indicates less forgetting. Baselines from PAM\cite{sokar2025continual}}
    \label{tab:PAM}
\resizebox{\columnwidth}{!}{%
    \begin{tabular}{lcccc}
        \toprule
        \textbf{METHOD} & \textbf{ACC} & \textbf{BWT} & \textbf{FWT} & \textbf{$A_t$} \\
        \midrule
        ZERO-SHOT   & 24.74          & -              & -              & -              \\
        INDEPENDENT & 76.46          & -              & -              & -              \\
        MULTITASK   & 73.93          & -              & -              & -              \\
        \midrule
        MoELoRA \cite{chen2024coin}         & 46.59$\pm$9.98 & -36.40$\pm$11.97& 7.79$\pm$2.24  & \textbf{76.93$\pm$0.27} \\
        MagMax\cite{marczak2024magmax}      & 45.74$\pm$0.88 & -22.68$\pm$6.51 & 4.75$\pm$3.36  & 76.29$\pm$0.18 \\
        PAM\cite{sokar2025continual}        & 49.89$\pm$1.66 & -19.45$\pm$0.95 & \textbf{11.11$\pm$0.09} & 76.31$\pm$0.03 \\
        DSCA (ours, PaliGemma-3B)           & \textbf{49.96$\pm$0.72} & \textbf{-9.37$\pm$1.02} & 11.04$\pm$0.13 & 76.48$\pm$0.07 \\
        \bottomrule
    \end{tabular}
    }
\end{table}

\subsection{Safeguarding Foundational VLM Capabilities}

A practical editor must remain benign with respect to the base model's general capabilities. We therefore measure post-edit performance on standard LVLM benchmarks MME \cite{fu2023mme}, MM-Vet \cite{yu2023mmvet}, VQA-v2 \cite{balanced_vqa_v2}, TextVQA \cite{singh2019towards}, and COCO Captions \cite{10.1007/978-3-319-10602-1_48} (CIDEr \cite{vedantam2015cider}). As shown in Table~\ref{tab:retention}, DSCA matches or exceeds strong baselines such as LiveEdit\cite{chen2025lifelong} and DualEdit on all benchmarks, indicating that high locality in the representation space effectively insulates foundational knowledge from degradation.
We also examine whether DSCA mitigates common LVLM failure modes such as object hallucination. Using the CHAIR metric \cite{rohrbach-etal-2018-object}, where lower scores are better, Table~\ref{tab:hallucination} shows that DSCA significantly reduces hallucination rates compared to prior editors, achieving a CHAIR-H score of \textbf{15.9} vs.\ 21.1 for LiveEdit and 20.8 for DualEdit\cite{shi2024dualedit}, improving over the previous state-of-the-art Gen-Anchor Rep-Edit \cite{shi2025exposing}. We attribute this to DSCA's constrained, approximately orthogonal semantic subspaces, which avoid activating loosely related concepts that trigger hallucinated objects.
\begin{table}[htbp]
\centering
\caption{Capability retention on standard LVLM benchmarks.}
\label{tab:retention}
\resizebox{\columnwidth}{!}{%
\begin{tabular}{lccccc}
\toprule
Method & MME\cite{fu2023mme} & MM-Vet\cite{yu2023mmvet} & VQA-v2\cite{balanced_vqa_v2} & TextVQA \cite{singh2019towards} & COCO CIDEr\cite{10.1007/978-3-319-10602-1_48}\cite{vedantam2015cider} \\
\midrule
LiveEdit\cite{chen2025lifelong}   & 73.4 & 54.2 & 82.1 & 61.5 & 120.6 \\
DualEdit\cite{shi2024dualedit}    & 74.1 & 55.6 & 83.5 & 62.9 & 121.4 \\
MRT \cite{751973dae1e349d19373b7f59f30e536}                               & 70.9 & 50.8 & 80.2 & 59.3 & 118.2 \\
\textbf{DSCA (ours)}               & \textbf{76.3} & \textbf{57.8} & \textbf{84.9} & \textbf{64.7} & \textbf{123.8} \\
\bottomrule
\end{tabular}
}
\end{table}
\begin{table}[htbp]
\centering
\caption{Hallucination stress test (CHAIR) \cite{rohrbach-etal-2018-object}. Lower CHAIR-H is better.}
\label{tab:hallucination}
\resizebox{\columnwidth}{!}{%
\begin{tabular}{lccc}
\toprule
Method & CHAIR-H $\downarrow$ & Faithfulness $\uparrow$ & Richness $\approx$ \\
\midrule
Gen-Anchor Rep-Edit\cite{shi2025exposing} & 18.5 & 92.3 & 98.1 \\
DualEdit\cite{shi2024dualedit}            & 20.8 & 90.6 & 97.2 \\
LiveEdit\cite{chen2025lifelong}           & 21.1 & 91.0 & 96.8 \\
\textbf{DSCA (ours)}                      & \textbf{15.9} & \textbf{94.5} & \textbf{98.4} \\
\bottomrule
\end{tabular}
}
\end{table}

\subsection{Diagnostic Analysis and Ablation Studies}
\label{sec:diagnostics}

\begin{figure*}[htbp]
    \centering
    \includegraphics[width=\textwidth]{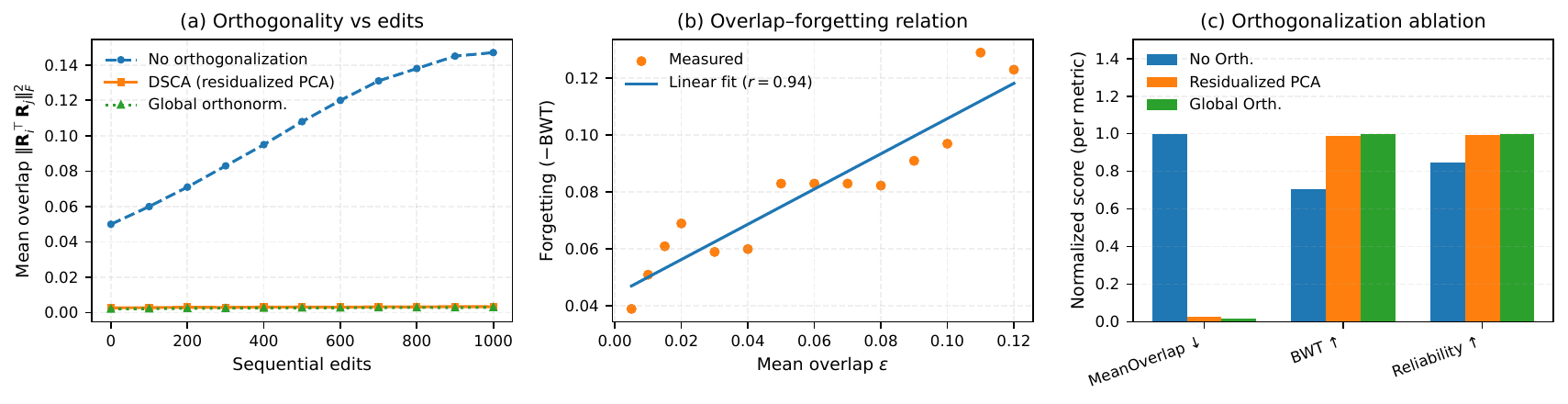}
    \vspace{-2mm}
    \caption{\textbf{Diagnostic analysis of DSCA.}
    \textbf{(a)} Mean pairwise subspace overlap($\varepsilon=\|\mathbf{R}_i^\top \mathbf{R}_j\|_F^2$) as a function of the number of sequential edits. 
    DSCA with residualized Incremental PCA keeps the overlap essentially flat at $\approx 3\times 10^{-3}$ across $1{,}000$ edits, comparable to a globally orthonormal baseline, whereas a variant without orthogonalization drifts to more than $10^{-1}$.
    \textbf{(b)} Relationship between mean subspace overlap and forgetting, measured as $1-\text{BWT}$. 
    The scatter plot and fitted line (Pearson $r\!\approx\!0.94$) show a strong monotonic dependence, as overlap $\varepsilon$ increases, forgetting grows almost linearly, consistent with the bounded-interference result in %Corollary~\ref{cor:bounded_interference}.
    Supplementary.
    \textbf{(c)} Ablation over subspace construction strategies (No orthogonalization, residualized PCA, global orthonormalization). 
    For each metric (mean overlap, BWT, reliability) scores are normalized per metric to $[0,1]$ for comparability. 
    Residualized PCA attains nearly the same BWT and reliability as global orthonormalization while dramatically reducing overlap relative to the non-orthogonal baseline, giving the best overall trade-off.}
    \label{fig:dsca_diagnostics}
    \vspace{-3mm}
\end{figure*}

We empirically validate the geometric properties claimed in Sec.~\ref{sec:dsam} in Supplementary. 
Fig.~\ref{fig:dsca_diagnostics}(a) shows that DSCA keeps the mean pairwise subspace overlap $\|\mathbf{R}_i^\top \mathbf{R}_j\|_F^2$ essentially constant at $\approx 3\times 10^{-3}$ over $1{,}000$ edits, close to the globally orthonormal baseline, whereas a variant that omits residualized orthogonalization drifts to overlaps above $10^{-1}$. 
Fig.~\ref{fig:dsca_diagnostics}(b) plots forgetting (measured as $1-\text{BWT}$) against the mean overlap $\varepsilon$ and reveals an almost linear trend with a high Pearson correlation ($r\!\approx\!0.94$), providing empirical support for the bounded-interference Corollary in the
Supplementary.
%Corollary~\ref{cor:bounded_interference}
As subspaces become less orthogonal, forgetting increases in a predictable way. 
Fig.\ref{fig:dsca_diagnostics}(c) summarizes the ablation across three metrics mean overlap, BWT, and edit reliability after normalizing each metric across methods. 
Residualized PCA achieves nearly the same BWT and reliability as global orthonormalization while strongly reducing overlap compared to the non-orthogonal baseline, which justifies our choice of residualized PCA as the default subspace construction in DSCA.

To dissect the contribution of each component in DSCA, we perform an ablation study summarized in Table~\ref{tab:ablation_component}. We report Edit Success (ES; higher is better), cumulative Locality Drop (Locality $\Delta$; lower is better), and Generalization (GEN; higher is better). The full DSCA model attains ES of 98.0, Locality $\Delta$ of only 0.5, and GEN of 97.3.

The central role of orthogonality is evident, removing it (\emph{w/o orthogonality}) increases the Locality Drop by more than $5\times$ (0.5 to 2.8) and significantly harms GEN. Gate sparsity is also critical, setting $\lambda_{\text{sparse}} = 0$ (\emph{w/o gate sparsity}) leads to dense module activation, raising Locality $\Delta$ to 2.1 and reducing ES to 96.1. Simplifying the hierarchical routing to a single stage or removing the basis-residual update similarly degrades locality, confirming that minimal, targeted residuals in concept-specific subspaces are key. Finally, reducing the number of subspaces (K/2) or the rank (r/2) produces predictable, modest drops, indicating that DSCA is robust to moderate capacity reductions. The effectiveness of our sparse design is visualized in Figure~\ref{fig:routsparse}. The histogram in Figure~\ref{fig:routsparse}(a) reveals a highly sparse activation pattern in the full model, with over 95\% of routing weights being negligible (near zero). This sparsity is by design, as shown in Figure~\ref{fig:routsparse}(b), which illustrates the trade-off between the sparsity loss coefficient and the number of active modules. Our chosen operating point (the blue dot) achieves a highly efficient state where, on average, only three DSAMs are activated per input.
Additional qualitative analyses, including concept-wise t-SNE visualizations of projected representations are provided in the Supplementary.

\begin{figure}[t]
    \centering
    \includegraphics[width=\columnwidth]{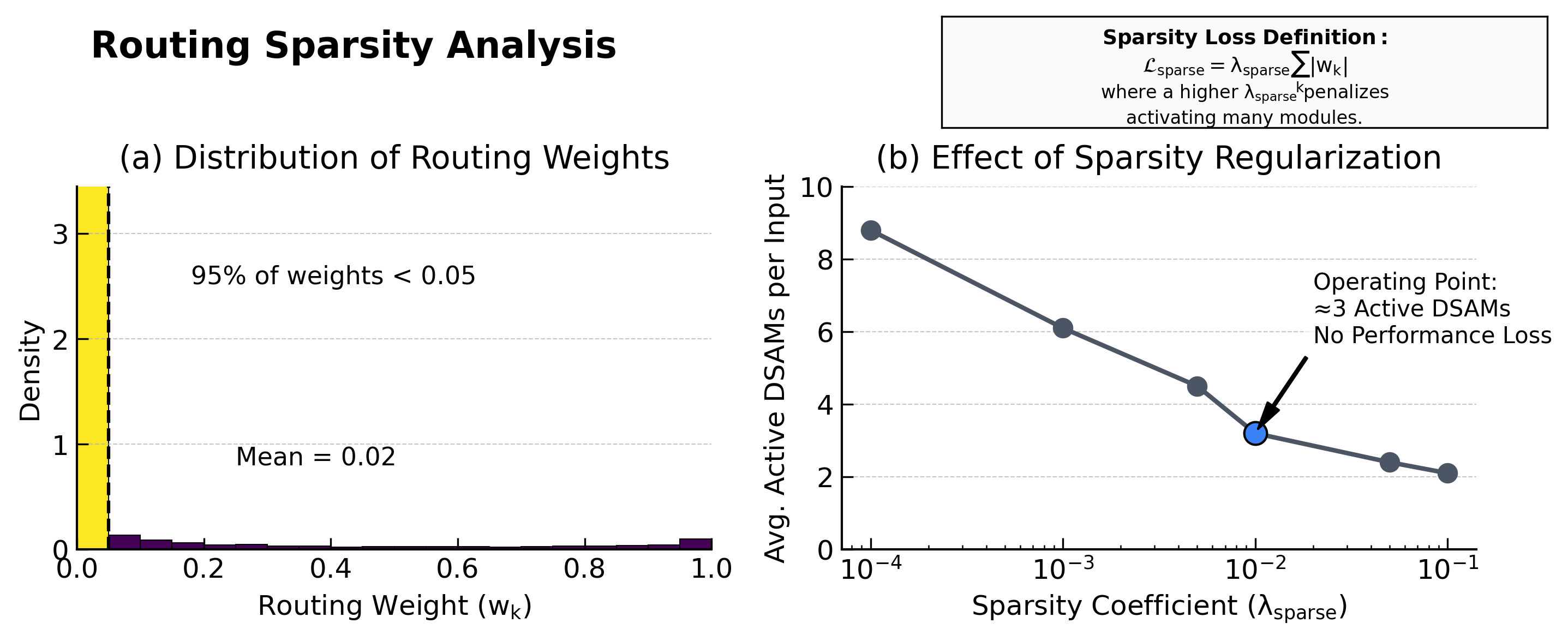}
    \vspace{-2mm}
    \caption{\textbf{Routing sparsity analysis.}
    (a) Histogram of routing weights shows that over 95\% are below 0.05, indicating highly selective module activation.
    (b) Trade-off between sparsity coefficient $\lambda_{\text{sparse}}$ and average number of active DSAMs. The chosen operating point (blue dot) yields $\approx 3$ active DSAMs per input without degrading performance.}
    \label{fig:routsparse}
    \vspace{-3mm}
\end{figure}
\begin{table*}[htbp]
\centering
\caption{Ablation study on DSCA components. A checkmark ($\checkmark$) indicates the component is present; a cross ($\times$) indicates it has been ablated or modified.}
\label{tab:ablation_component}

\newcolumntype{C}{>{\centering\arraybackslash}m{1.6cm}}

\begingroup
\renewcommand{\arraystretch}{0.95}
\resizebox{\linewidth}{!}{%
\begin{tabular}{l *{6}{C} | ccc}
\toprule
\textbf{Variant} &
\shortstack[c]{\strut\textbf{Orthogonality}\\\textbf{in $R_k$}\strut} &
\shortstack[c]{\strut\textbf{Gate}\\\textbf{Sparsity}\strut} &
\shortstack[c]{\strut\textbf{Multi-stage}\\\textbf{Routing}\strut} &
\shortstack[c]{\strut\textbf{Basis Residual}\\\textbf{intervention}\strut} &
\shortstack[c]{\strut\textbf{Full}\\\textbf{Subspace}\strut} &
\shortstack[c]{\strut\textbf{Full}\\\textbf{Rank}\strut} &
\textbf{ES} $\uparrow$ &
\shortstack[c]{\strut\textbf{Locality}\\$\Delta \downarrow$\strut} &
\textbf{GEN} $\uparrow$
\\
\midrule
Full DSCA & \cmark & \cmark & \cmark & \cmark & \cmark & \cmark & 98.0 & 0.5 & 97.3 \\
\midrule
w/o orthogonality     & \xmark & \cmark & \cmark & \cmark & \cmark & \cmark & 95.8 & 2.8 & 93.4 \\
w/o gate sparsity     & \cmark & \xmark & \cmark & \cmark & \cmark & \cmark & 96.1 & 2.1 & 94.7 \\
single-stage routing  & \cmark & \cmark & \xmark & \cmark & \cmark & \cmark & 96.9 & 1.9 & 95.0 \\
no basis-residual     & \cmark & \cmark & \cmark & \xmark & \cmark & \cmark & 97.1 & 1.5 & 95.8 \\
fewer subspaces (K/2) & \cmark & \cmark & \cmark & \cmark & \xmark & \cmark & 97.0 & 1.7 & 95.5 \\
lower rank (r/2)      & \cmark & \cmark & \cmark & \cmark & \cmark & \xmark & 97.3 & 1.2 & 96.1 \\
\bottomrule
\end{tabular}%
}
\endgroup
\end{table*}

\subsection{Discussion}

Across all evaluations, DSCA achieves strong edit success, robust generalization, and near-perfect locality in both single-edit and challenging lifelong scenarios. By routing inputs into sparsely activated, approximately orthogonal concept subspaces and applying gated residual interventions within them, DSCA turns forgetting into a controlled geometric quantity rather than an emergent side effect of optimization. This geometry-aware design yields favorable stability--plasticity trade-offs on CoIN while preserving broad LVLM capabilities, making DSCA a practical tool for maintaining and refining large VLMs under continual editing.

\section{Conclusion and Limitations}
We introduced DSCA, a Dynamic Subspace Concept Alignment framework for editing large vision--language models. DSCA partitions the fused representation space into concept-specific semantic subspaces and attaches lightweight Dynamic Structured Alignment Modules that operate within these subspaces. Combined with sparse two-stage routing and a multi-objective loss over task fidelity, locality, and cross-modal alignment, this architecture enables precise, minimally invasive edits. Empirically, DSCA achieves state-of-the-art performance on single-edit and lifelong editing benchmarks, while retaining the core capabilities of the underlying LVLM and substantially reducing catastrophic forgetting in continual learning settings. On CoIN, DSCA attains a markedly better stability--plasticity trade-off, indicating that controlling interference via (approximately) orthogonal semantic subspaces is an effective design principle.

DSCA still has several limitations. First, it relies on a linear semantic subspace model, each concept is represented by a low-rank basis $R_k$ and edits are implemented as gated linear residuals, which may be restrictive for highly non-linear or entangled concepts. Second, maintaining approximately orthogonal subspaces becomes more costly as the number of concepts $K$ grows, suggesting the need for additional compression, sharing, or sparsity in the subspace representations. Third, DSCA depends on reliable concept discovery and routing; when concepts are highly overlapping or ambiguous, misassignments can lead to suboptimal edits or residual interference. Extending DSCA to richer non-linear subspaces or hyper-network parameterizations, exploring tighter integration with backbone model training, and applying the framework to other modalities (e.g., video--language, audio--visual, or embodied agents) are promising directions for future work.

\section{Acknowledgments}
This work was supported by Zynix AI's Foundational Research Grant, whose support enabled the exploration and development of the ideas presented in this work. We gratefully acknowledge Zynix AI for providing the research environment, computational infrastructure, and collaborative ecosystem that made this research possible.

We are especially grateful to Gautamdev Chowdary, CTO and Dr. Jayadeva Chowdappa, CEO for their encouragement, insightful discussions, and continued support of foundational AI research. Their perspective and feedback helped shape the direction of this work.
We also thank the broader Zynix AI team for valuable technical discussions and feedback throughout the course of this research.

{
    \small
    \bibliographystyle{ieeenat_fullname}
    \bibliography{main}
}

% WARNING: do not forget to delete the supplementary pages from your submission 
\clearpage
\setcounter{page}{1}
\maketitlesupplementary

\section{Contents}
{\color{blue}
\begin{itemize}
    \item Theoretical Analysis of Non-Interference in DSCA
    \item Additional Methodology Details
    \item Evaluation Metrics
    \item Implementation Details and Hyperparameters
    \item Extended Experimental Results
\end{itemize}
}

% ================== THEORY ================== %
\section{Theoretical Analysis of Non-Interference in DSCA}
\label{app:theory}

\subsection{Preliminaries}

Let the frozen VLM encoder produce fused representations
$\mathbf{h}_f \in \mathbb{R}^{d_f}$(as defined in Sec. \ref{sec:problem_formulation}).
For each discovered concept $C_k$, DSCA maintains a low-dimensional
semantic subspace with basis matrix
$\mathbf{R}_k \in \mathbb{R}^{r_k \times d_f}$, where $r_k \ll d_f$.(Sec.~\ref{sec:dsam})
We view the rows of $\mathbf{R}_k$ as an orthonormal basis for the
concept subspace
\begin{equation}
    \mathcal{R}_k = \left\{ \mathbf{R}_k^\top \mathbf{u} \; \middle| \;
    \mathbf{u} \in \mathbb{R}^{r_k} \right\}
    \subset \mathbb{R}^{d_f}.
\end{equation}
The corresponding orthogonal projector onto $\mathcal{R}_k$ is
\begin{equation}
    \mathbf{P}_k = \mathbf{R}_k^\top \mathbf{R}_k \in \mathbb{R}^{d_f \times d_f}.
\end{equation}

In Sec.~\ref{sec:dsam}, the intervention proposed by DSAM $k$ is
\begin{equation}
\label{eq:app_dsam_core}
    \Psi_k(\mathbf{h}_f)
    = \mathbf{\Gamma}_k(\mathbf{h}_f) \,
      \mathbf{R}_k^\top
      \big[
        (\mathbf{W}_k \mathbf{h}_f + \mathbf{b}_k)
        - \mathbf{R}_k \mathbf{h}_f
      \big],
\end{equation}
and the full edited representation (Sec.~\ref{sec:routing_and_intervention})
is
\begin{equation}
\label{eq:app_full_update}
    \mathbf{h}'_f
    = \mathbf{h}_f + \sum_{k=1}^{K} w_k \,\Psi_k(\mathbf{h}_f),
\end{equation}
where $\{w_k\}$ are routing weights (Sec.~\ref{sec:HierarchicalGating})
and $\mathbf{\Gamma}_k(\mathbf{h}_f)$ is the diagonal gating matrix
defined in Eq.~\ref{eq:diag}.

For the theoretical analysis, it is convenient to absorb the
component-wise gate into an effective subspace update.  We therefore
rewrite
\begin{equation}
    \Psi_k(\mathbf{h}_f)
    = \mathbf{R}_k^\top \,\Delta_k(\mathbf{h}_f),
\end{equation}
where $\Delta_k(\mathbf{h}_f) \in \mathbb{R}^{r_k}$ is an
\emph{effective} low-dimensional update that depends on
$\mathbf{W}_k, \mathbf{b}_k$ and $\mathbf{\Gamma}_k(\mathbf{h}_f)$.
This does not change the functional form of DSAM in practice, but lets
us express the update as a sum of subspace-aligned residuals.

\subsection{Assumptions}
\label{sec:assum}

We now make explicit the structural assumptions under which non-interference
guarantees hold.

\paragraph{A1. Orthogonal subspaces.}
The semantic subspaces are mutually orthogonal:
\begin{equation}
    \mathbf{R}_i \mathbf{R}_j^\top = \mathbf{0}
    \quad \text{for all } i \neq j,
\end{equation}
and each basis is row-orthonormal,
$\mathbf{R}_k \mathbf{R}_k^\top = \mathbf{I}_{r_k}$.
This implies that the projectors satisfy
$\mathbf{P}_i \mathbf{P}_j = \mathbf{0}$ for $i \neq j$.

\paragraph{A2. Bounded routing and gating.}
For any $\mathbf{h}_f$, the routed update has bounded energy:
\begin{equation}
    \sum_{k=1}^{K} |w_k| \, \|\Delta_k(\mathbf{h}_f)\|_2
    \;\leq\; \Gamma_{\max},
\end{equation}
for some constant $\Gamma_{\max} > 0$.

\paragraph{A3. Concept-aligned training.}
Each DSAM $k$ is trained only on samples assigned to concept $C_k$
by the online clustering as in Sec.~\ref{sec:concept_partitioning} .
DSAM $k$ learns to operate only within its own semantic
region.

In practice, Assumption~A1 is only approximately satisfied.  The
residualized Incremental PCA procedure described in
Sec.~\ref{sec:dsam} and detailed further in and detailed further in Sec.\ref{sec:dual_mode}, maintains small
overlap between subspaces, which we quantify in
Sec.~\ref{sec:orthogonality_diag}.

\subsection{Non-Interference Under Orthogonal Subspaces}

We first state a clean result under exact orthogonality, then extend
to the approximate case.

\begin{lemma}[Non-interference for orthogonal subspaces]
\label{lem:non_interference}
Let $\mathbf{h}_f \in \mathbb{R}^{d_f}$ and
$\mathbf{h}'_f$ be related by Eq.~\ref{eq:app_full_update}.
Suppose Assumption~A1(in Sec.\ref{sec:assum}) holds.
Then, for any concept index $i$,
\begin{equation}
    \mathbf{P}_i \mathbf{h}'_f
    = \mathbf{P}_i \mathbf{h}_f
      + w_i \,\mathbf{P}_i \Psi_i(\mathbf{h}_f).
\end{equation}
In particular, for all $j \neq i$,
the contribution of $\Psi_j$ vanishes in the projection onto
$\mathcal{R}_i$:
\begin{equation}
    \mathbf{P}_i \Psi_j(\mathbf{h}_f) = \mathbf{0}
    \quad \text{for } j \neq i.
\end{equation}
\end{lemma}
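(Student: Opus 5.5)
The plan is to work entirely with the effective representation $\Psi_k(\mathbf{h}_f)=\mathbf{R}_k^\top\Delta_k(\mathbf{h}_f)$ introduced just before the assumptions. Every DSAM output then lies in the column space of $\mathbf{R}_k^\top$, which is $\mathcal{R}_k$. Given this, the lemma reduces to linearity of the projector $\mathbf{P}_i$ together with Assumption~A1. Assumptions A2 and A3 are not needed for this exact statement; they matter only for the later approximate extension.

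I would carry out the steps in the following order.
\begin{enumerate}
\item Apply $\mathbf{P}_i$ to Eq.~\ref{eq:app_full_update}. Since $\mathbf{P}_i$ is linear and the $w_k$ are scalars, this gives
\begin{equation*}
\mathbf{P}_i\mathbf{h}'_f=\mathbf{P}_i\mathbf{h}_f+\sum_{k=1}^K w_k\,\mathbf{P}_i\Psi_k(\mathbf{h}_f).
\end{equation*}
\item For $j\neq i$, substitute $\mathbf{P}_i=\mathbf{R}_i^\top\mathbf{R}_i$ and $\Psi_j=\mathbf{R}_j^\top\Delta_j$ to obtain
\begin{equation*}
\mathbf{P}_i\Psi_j(\mathbf{h}_f)=\mathbf{R}_i^\top(\mathbf{R}_i\mathbf{R}_j^\top)\Delta_j(\mathbf{h}_f)=\mathbf{0},
\end{equation*}
where the last equality uses $\mathbf{R}_i\mathbf{R}_j^\top=\mathbf{0}$ from A1. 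This is the second claim of the lemma.
\item Insert this into the sum from the first step. Only the $k=i$ term survives, which gives the first claim.
\end{enumerate}

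As a remark, row-orthonormality $\mathbf{R}_i\mathbf{R}_i^\top=\mathbf{I}_{r_i}$ also gives $\mathbf{P}_i\Psi_i=\Psi_i$. So the surviving term is exactly $w_i\Psi_i(\mathbf{h}_f)$, and by the same argument the complementary component $(\mathbf{I}-\sum_k\mathbf{P}_k)\mathbf{h}'_f$ is left unchanged.

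The only real obstacle is justifying that $\Psi_k$ lies in $\mathcal{R}_k$. With the literal definition in Eq.~\ref{eq:app_dsam_core}, the diagonal gate $\mathbf{\Gamma}_k(\mathbf{h}_f)$ is applied after lifting by $\mathbf{R}_k^\top$. In general, $\mathbf{\Gamma}_k\mathbf{R}_k^\top\mathbf{u}\notin\mathcal{R}_k$, because a diagonal matrix need not preserve a non-coordinate subspace. I would handle this in one of two ways.
\begin{enumerate}
\item State explicitly that the lemma is proved for the effective form, with the gate understood as acting in subspace coordinates, i.e.\ $\Psi_k=\mathbf{R}_k^\top\tilde{\mathbf{\Gamma}}_k(\mathbf{h}_f)\big[(\mathbf{W}_k\mathbf{h}_f+\mathbf{b}_k)-\mathbf{R}_k\mathbf{h}_f\big]$ with $\tilde{\mathbf{\Gamma}}_k$ an $r_k\times r_k$ gate. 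This is how the preliminaries phrase the rewrite.
\item Alternatively, note that the argument goes through verbatim if one replaces $\mathbf{\Gamma}_k$ by $\mathbf{P}_k\mathbf{\Gamma}_k$, so that $\Delta_k:=\mathbf{R}_k\mathbf{\Gamma}_k\mathbf{R}_k^\top\big[\cdots\big]$.
\end{enumerate}
Under the unprojected gate, the leakage term $\mathbf{P}_i\mathbf{\Gamma}_j\mathbf{R}_j^\top(\cdot)$ is not zero. It would have to be absorbed into the approximate-orthogonality bound, using $\|\mathbf{P}_i\mathbf{\Gamma}_j\mathbf{P}_j\|$ and A2, rather than into this exact lemma.
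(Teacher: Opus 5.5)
Your proof is correct and follows the paper's argument exactly: write $\Psi_k=\mathbf{R}_k^\top\Delta_k$, apply $\mathbf{P}_i$ by linearity, and kill the cross terms using $\mathbf{R}_i\mathbf{R}_k^\top=\mathbf{0}$ from A1. Your caveat about the gate is also well taken. The paper only asserts that the effective form $\mathbf{R}_k^\top\Delta_k$ ``does not change the functional form,'' but a diagonal gate $\mathbf{\Gamma}_k(\mathbf{h}_f)$ acting in $\mathbb{R}^{d_f}$ generally does not preserve $\mathcal{R}_k$, so the lemma really does need one of the reinterpretations you propose.
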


\begin{proof}
Using the effective subspace form
$\Psi_k(\mathbf{h}_f) = \mathbf{R}_k^\top \Delta_k(\mathbf{h}_f)$ and
Eq.~\ref{eq:app_full_update},
\begin{equation}
    \mathbf{h}'_f
    = \mathbf{h}_f + \sum_{k=1}^{K}
      w_k \,\mathbf{R}_k^\top \Delta_k(\mathbf{h}_f).
\end{equation}
Projecting onto $\mathcal{R}_i$ gives
\begin{align}
    \mathbf{P}_i \mathbf{h}'_f
    &= \mathbf{P}_i \mathbf{h}_f
       + \sum_{k=1}^{K}
         w_k \,\mathbf{P}_i \mathbf{R}_k^\top \Delta_k(\mathbf{h}_f) \\
    &= \mathbf{P}_i \mathbf{h}_f
       + \sum_{k=1}^{K}
         w_k \,\mathbf{R}_i^\top
         \big( \mathbf{R}_i \mathbf{R}_k^\top \big)
         \Delta_k(\mathbf{h}_f).
\end{align}
By Assumption~A1(in Sec.\ref{sec:assum}), $\mathbf{R}_i \mathbf{R}_k^\top = \mathbf{0}$ for
all $k \neq i$, and $\mathbf{R}_i \mathbf{R}_i^\top = \mathbf{I}$.
Therefore all cross-concept terms vanish and only the $k=i$ term
remains:
\begin{equation}
    \mathbf{P}_i \mathbf{h}'_f
    = \mathbf{P}_i \mathbf{h}_f
      + w_i \,\mathbf{R}_i^\top
      \Delta_i(\mathbf{h}_f)
    = \mathbf{P}_i \mathbf{h}_f
      + w_i \,\mathbf{P}_i \Psi_i(\mathbf{h}_f),
\end{equation}
which proves both claims.
\end{proof}

Lemma~\ref{lem:non_interference} shows that, under ideal orthogonality,
\emph{edits in one semantic subspace cannot alter projections onto any
other subspace}.  In this sense, catastrophic interference is ruled
out at the level of the subspace decomposition.

\subsection{Approximate Orthogonality and Bounded Interference}

In practice, residualized Incremental PCA produces subspaces that are
only approximately orthogonal.  We therefore consider the more
realistic setting where inter-subspace overlap is small but non-zero.

Define the pairwise subspace overlap as
\begin{equation}
    \mathrm{Overlap}(i,j)
    = \big\| \mathbf{R}_i \mathbf{R}_j^\top \big\|_F^2,
    \quad i \neq j,
\end{equation}
and assume a uniform bound
\begin{equation}
\label{eq:eps_overlap}
    \big\| \mathbf{R}_i \mathbf{R}_j^\top \big\|_F^2
    \;\leq\; \varepsilon
    \quad \text{for all } i \neq j.
\end{equation}

\begin{corollary}[Bounded interference under $\varepsilon$-overlap]
\label{cor:bounded_interference}
Suppose Assumptions~A2(in Sec.\ref{sec:assum}) and~\eqref{eq:eps_overlap} hold.
Let $\mathbf{h}'_f$ be given by Eq.~\ref{eq:app_full_update}.  Then,
for any concept index $i$,
the interference from all other concepts in the projection onto
$\mathcal{R}_i$ is bounded by
\begin{equation}
    \Big\|
      \mathbf{P}_i (\mathbf{h}'_f - \mathbf{h}_f)
      - w_i \mathbf{P}_i \Psi_i(\mathbf{h}_f)
    \Big\|_2
    \;\leq\;
    \Gamma_{\max} \,\sqrt{\varepsilon}.
\end{equation}
\end{corollary}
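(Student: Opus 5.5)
The plan mirrors the proof of Lemma~\ref{lem:non_interference}, but keeps the cross terms and bounds them instead of setting them to zero. I will use the effective subspace form $\Psi_k(\mathbf{h}_f)=\mathbf{R}_k^\top\Delta_k(\mathbf{h}_f)$. I will also use the within-subspace half of A1, namely row-orthonormality $\mathbf{R}_k\mathbf{R}_k^\top=\mathbf{I}_{r_k}$. This half is not dropped in the approximate setting: each $\mathbf{R}_k$ is still produced by (Incremental) PCA and so has orthonormal rows. Only the cross-orthogonality $\mathbf{R}_i\mathbf{R}_j^\top=\mathbf{0}$ is replaced by the overlap bound \eqref{eq:eps_overlap}. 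I will state this reliance explicitly, since the corollary only names A2 and \eqref{eq:eps_overlap}.

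\textbf{Step 1 (isolate the interference term).} Substituting Eq.~\ref{eq:app_full_update}, I get
\begin{equation*}
\mathbf{P}_i(\mathbf{h}'_f-\mathbf{h}_f)=\sum_{k=1}^K w_k\,\mathbf{R}_i^\top(\mathbf{R}_i\mathbf{R}_k^\top)\Delta_k(\mathbf{h}_f).
\end{equation*}
For $k=i$, row-orthonormality gives $\mathbf{R}_i^\top\mathbf{R}_i\mathbf{R}_i^\top\Delta_i=\mathbf{R}_i^\top\Delta_i=\mathbf{P}_i\Psi_i(\mathbf{h}_f)$, so this term is exactly $w_i\mathbf{P}_i\Psi_i(\mathbf{h}_f)$. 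The quantity to bound is therefore
\begin{equation*}
\Big\|\sum_{k\neq i} w_k\,\mathbf{R}_i^\top(\mathbf{R}_i\mathbf{R}_k^\top)\Delta_k(\mathbf{h}_f)\Big\|_2 .
\end{equation*}

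\textbf{Step 2 (bound each cross term).} By the triangle inequality, the quantity is at most $\sum_{k\neq i}|w_k|\,\|\mathbf{R}_i^\top(\mathbf{R}_i\mathbf{R}_k^\top)\Delta_k\|_2$. Since $\mathbf{R}_i$ has orthonormal rows, $\mathbf{R}_i^\top$ is an isometry from $\mathbb{R}^{r_i}$ into $\mathbb{R}^{d_f}$, so $\|\mathbf{R}_i^\top x\|_2=\|x\|_2$. Next I use that the spectral norm is at most the Frobenius norm, together with \eqref{eq:eps_overlap}:
\begin{equation*}
\|\mathbf{R}_i\mathbf{R}_k^\top\Delta_k\|_2\le\|\mathbf{R}_i\mathbf{R}_k^\top\|_2\,\|\Delta_k\|_2\le\|\mathbf{R}_i\mathbf{R}_k^\top\|_F\,\|\Delta_k\|_2\le\sqrt{\varepsilon}\,\|\Delta_k\|_2 .
\end{equation*}

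\textbf{Step 3 (apply A2).} Summing the per-term bounds gives at most $\sqrt{\varepsilon}\sum_{k\neq i}|w_k|\,\|\Delta_k(\mathbf{h}_f)\|_2$. Because every summand is nonnegative, this partial sum is at most $\sqrt{\varepsilon}\sum_{k=1}^K|w_k|\,\|\Delta_k(\mathbf{h}_f)\|_2\le\Gamma_{\max}\sqrt{\varepsilon}$ by A2, which is the claimed bound.

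The calculation itself is routine. I expect the main obstacle to be bookkeeping about hypotheses rather than analysis, and it has two parts.
\begin{itemize}
\item The isometry step in Step~2 and the identification of the $k=i$ term in Step~1 both need row-orthonormality of $\mathbf{R}_i$, which the corollary's hypothesis list does not name.
\item The effective update $\Delta_k$ must be well defined. The diagonal gate $\mathbf{\Gamma}_k$ does not in general preserve $\mathcal{R}_k$, so I would take $\Delta_k$ as given by the paper's rewriting convention rather than derive it from Eq.~\ref{eq:app_dsam_core}.
\end{itemize}
If row-orthonormality is not granted, I would instead bound $\|\mathbf{R}_i^\top\|_2$ by its largest singular value and carry that factor into the final constant.
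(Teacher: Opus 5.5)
Your proposal is correct and follows the paper's proof essentially line for line. You isolate $\sum_{j\neq i} w_j\,\mathbf{R}_i^\top(\mathbf{R}_i\mathbf{R}_j^\top)\Delta_j(\mathbf{h}_f)$, bound each term by $\sqrt{\varepsilon}\,|w_j|\,\|\Delta_j(\mathbf{h}_f)\|_2$ via the triangle inequality and the Frobenius-norm bound, and finish with A2. On your bookkeeping: the isometry of $\mathbf{R}_i^\top$ is exactly what the paper uses silently, and it is already granted in the Preliminaries, where the rows of $\mathbf{R}_k$ are an orthonormal basis and $\mathbf{P}_k$ is an orthogonal projector; the $k=i$ identification in your Step~1 needs no orthonormality, since it holds by linearity alone.
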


\begin{proof}
From the same expansion as in Lemma~\ref{lem:non_interference}, the
net contribution of all \emph{other} concepts to the projection onto
$\mathcal{R}_i$ is
\begin{equation}
    \mathbf{r}_i
    = \sum_{j \neq i}
      w_j \,\mathbf{P}_i \Psi_j(\mathbf{h}_f)
    = \sum_{j \neq i}
      w_j \,\mathbf{R}_i^\top
      \big( \mathbf{R}_i \mathbf{R}_j^\top \big)
      \Delta_j(\mathbf{h}_f).
\end{equation}
Using the triangle inequality and Cauchy--Schwarz,
\begin{align}
    \|\mathbf{r}_i\|_2
    &\leq
    \sum_{j \neq i}
    |w_j| \,
    \big\|
        \mathbf{R}_i^\top
        \big( \mathbf{R}_i \mathbf{R}_j^\top \big)
        \Delta_j(\mathbf{h}_f)
    \big\|_2 \\
    &\leq
    \sum_{j \neq i}
    |w_j| \,
    \big\|
        \mathbf{R}_i \mathbf{R}_j^\top
    \big\|_F \,
    \big\|\Delta_j(\mathbf{h}_f)\big\|_2 \\
    &\leq
    \sqrt{\varepsilon}
    \sum_{j \neq i}
    |w_j| \,\big\|\Delta_j(\mathbf{h}_f)\big\|_2.
\end{align}
By Assumption~A2(in Sec.\ref{sec:assum}),
$\sum_{j} |w_j| \,\|\Delta_j(\mathbf{h}_f)\|_2 \leq \Gamma_{\max}$,
which yields the desired bound.
\end{proof}

Corollary~\ref{cor:bounded_interference} implies that interference
between concepts scales only with $\sqrt{\varepsilon}$, the square
root of the maximum inter-subspace overlap.  When subspaces are nearly
orthogonal (small $\varepsilon$), the effect of other concepts on the
projection of $\mathbf{h}_f$ onto $\mathcal{R}_i$ is provably small.
This confirms the findings visualized in Fig.\ref{fig:dsca_diagnostics}(a)

\subsection{Empirical Orthogonality Diagnostics}
\label{sec:orthogonality_diag}

To verify that the assumptions above hold approximately in practice, we
monitor the empirical subspace overlap throughout training.  For a
given checkpoint, we compute
\begin{equation}
    \mathrm{MeanOverlap}
    = \frac{1}{K(K-1)} \sum_{i \neq j}
      \big\| \mathbf{R}_i \mathbf{R}_j^\top \big\|_F^2.
\end{equation}
Here, $K$ represents the total count of concept subspaces currently instantiated by the model, corresponding to the dynamic cluster set size defined in Sec.\ref{sec:concept_partitioning}.
In our experiments, the residualized Incremental PCA procedure keeps
$\mathrm{MeanOverlap}$ in the range of $10^{-3}$, consistent with a
small $\varepsilon$ regime.  Combined with the bounded-update
assumption (A2,in Sec.\ref{sec:assum}), this empirically supports the claim that DSCA edits
are structurally localized and exhibit minimal cross-concept
interference.

\paragraph{Discussion.}
The results above formalize the central intuition behind DSCA;
by decomposing the fused representation space into (approximately)
orthogonal concept-wise subspaces and restricting edits to operate
within these subspaces, the effect of an edit becomes spatially
localized in representation space.  As a result, preserving previously
learned concepts is no longer purely an algorithmic property (e.g.,
via replay or regularization), but also an \emph{architectural}
consequence of the subspace design.

% ================== ADDITIONAL METHODOLOGY ================== %
\section{Additional Methodology Details}
\label{sec:supp_method}

\subsection{Gating Implementation Details}
\label{sec:supp_gating_details}

As discussed in Sec.~\ref{sec:dsam}, the component-wise gating vector
$\gamma_k(\mathbf{h}_f) \in [0,1]^{d_f}$ is implemented via a lightweight neural layer
\[
\gamma_k(\mathbf{h}_f) = \sigma(W_{g,k}\mathbf{h}_f + b_{g,k}),
\]
where $\sigma$ is the element-wise sigmoid. To avoid quadratic parameter growth in $d_f$, we factorize $W_{g,k}$ as a low-rank bottleneck:
\[
W_{g,k} = U_k V_k,
\]
with $U_k \in \mathbb{R}^{d_f \times b}$, $V_k \in \mathbb{R}^{b \times d_f}$, and $b \ll d_f$.
This ensures that each DSAM adds only $O(d_f b)$ parameters, allowing DSCA to scale to many concepts without a prohibitive memory footprint.

\subsection{Exact Loss Definitions}
\label{sec:supp_losses}

For completeness, we provide the full mathematical form of the loss components introduced in Sec.~\ref{sec:loss}.
To navigate the trilemma of \textit{task fidelity}, \textit{locality}, and \textit{cross-modal alignment},(Sec.\ref{sec:intro}, \ref{sec:problem_formulation}) we design a composite loss function. Our training strategy operates on batches containing both new ``edit'' samples ($\mathcal{D}_e$) and ``out-of-scope'' replay samples ($\mathcal{D}_o$).
Each edit sample is denoted as $(I_e, T_e)$, representing the input image and text for which the model's behavior is being updated.
This dual-batch approach allows us to simultaneously learn the new task while preserving existing knowledge. The total loss is a weighted sum of four distinct objective terms.

\subsubsection{Task Fidelity Loss \texorpdfstring{($\mathcal{L}_{\text{task}}$)}{L_task}}
To ensure the edit is successful, we apply a standard causal language modeling loss to the edit batch $\mathcal{D}_e$. This loss minimizes the negative log-likelihood of the target token sequence $Y_{\text{target}} = (y_1, y_2, \dots, y_L)$, conditioned on the input and our model's intervention:
\begin{equation}
    \mathcal{L}_{\text{task}} = \mathbb{E}_{(\bm{I}_e, \bm{T}_e) \in \mathcal{D}_e} \left[ \sum_{i=1}^{L} \text{CrossEntropy}(\text{Logits}_i, y_i) \right],
\end{equation}
where $\text{Logits}_i$ is the model's predicted logit distribution for the $i$-th token after the update $h'_f$ has been applied. This directly optimizes the DSAM parameters to produce the correct textual output.

\subsubsection{Cross-Modal Alignment Loss \texorpdfstring{($\mathcal{L}_{\text{align}}$)}{L_align}}
An edit must not only produce the correct text but also maintain the VLM's fundamental cross-modal consistency. The update $\Delta\bm{h}_f$ is computed from the fused representation. To prevent this update from causing a modality drift, we enforce that the modified visual semantics remain coherent with the original textual semantics. We achieve this by regularizing the post-edit visual vector $\bm{h}'_{v,e} = \bm{h}_{v,e} + \Delta\bm{h}_{f,e}$ to remain aligned with the unmodified text vector $h_{t,e}$. This is formulated as a cosine similarity maximization, applied only to the edit batch $\mathcal{D}_e$:
\begin{equation}
    \mathcal{L}_{\text{align}} = \mathbb{E}_{(\bm{I}_e, \bm{T}_e) \in \mathcal{D}_e} \left[ 1 - \frac{(\bm{h}'_{v,e})^\top \bm{h}_{t,e}}{\|\bm{h}'_{v,e}\|_2 \|\bm{h}_{t,e}\|_2} \right].
\end{equation}
This asymmetric application i.e. updating the visual modality to match the static text—acts as a stable anchor, preventing the model from altering the textual concept space during a visually-driven edit.

\subsubsection{Contrastive Representation Distillation Loss \texorpdfstring{($\mathcal{L}_{\text{cdistill}}$)}{L_cdistill}}
To ensure locality, edits on one concept must not corrupt unrelated knowledge. A simple L2 distance loss on replay samples is insufficient as it penalizes all deviations equally and fails to preserve the \textit{relational geometry} of the embedding space. We therefore employ a more powerful contrastive distillation loss on the replay batch $\mathcal{D}_o$. For each sample in $\mathcal{D}_o$, the updated representation $\bm{h}'_{f,o}$ should be far more similar to its own original version (computed by a frozen teacher model, $\bm{h}^{\text{frozen}}_{f,o}$) than to any other sample in the batch. This is formulated using an InfoNCE objective \cite{oord2018representation}:
\begin{equation}
    \mathcal{L}_{\text{cdistill}} = -\mathbb{E}_{\bm{h}'_{f,o} \in \mathcal{D}_o} \left[ \log \frac{\exp(\text{sim}(\bm{h}'_{f,o}, \bm{h}^{\text{frozen}}_{f,o}) / \tau)}{\sum_{j \in \mathcal{D}_o} \exp(\text{sim}(\bm{h}'_{f,o}, \bm{h}^{\text{frozen}}_{f,o,j}) / \tau)} \right],
\end{equation}
where $\text{sim}(\cdot, \cdot)$ is the cosine similarity and $\tau$ is a temperature hyperparameter.
This loss provides a rich training signal that explicitly safeguards the model's relational knowledge structure against catastrophic forgetting.
%--------_%
\subsubsection{Gate Sparsity Loss \texorpdfstring{($\mathcal{L}_{\text{sparse}}$)}{L_sparse}}
To enforce a strict locality principle, interventions must only occur when an input is confidently matched to a concept. For out-of-scope samples, the routing weights $\{w_k\}$ should ideally all be zero. We encourage this behavior by applying an L1 penalty to the weights for samples in the replay batch $\mathcal{D}_o$. The L1 norm is well-suited for this task as it promotes solutions where most weights are exactly zero, preventing spurious DSAM activations.
\begin{equation}
    \mathcal{L}_{\text{sparse}} = \mathbb{E}_{\mathcal{D}_o} \left[ \sum_{k=1}^{K} |w_k| \right].
\end{equation}
This regularizer ensures that interventions are triggered selectively, preserving the integrity of unrelated representations.

%---------------%

\subsubsection{The Complete Objective Function}
The final loss is a weighted sum of these four components:
\begin{equation}
    \mathcal{L} = \mathcal{L}_{\text{task}} + \lambda_{\text{align}}\mathcal{L}_{\text{align}} + \lambda_{\text{distill}}\mathcal{L}_{\text{cdistill}} + \lambda_{\text{sparse}}\mathcal{L}_{\text{sparse}},
    \label{eq:total_loss_supp}
\end{equation}
where the $\lambda$ coefficients are hyperparameters that control the trade-off between plasticity and stability.

\subsection{Dual-Mode Parameter Update Framework}
\label{sec:dual_mode}

A key aspect of DSCA's stability is the separation of how different components are updated. Model parameters are partitioned into two categories governed by different update mechanisms and schedules. This framework corresponds to the DSCA Continual Training Loop and procedures outlined in Algorithm~\ref{alg:main_loop} and \ref{alg:procedures}.

\paragraph{Gradient-driven parameters (continuous update).}
This group contains all parameters directly responsible for executing the intervention. For each concept $k$ (Sec.~\ref{sec:concept_partitioning}), this set includes the subspace transformation parameters $\{\mathbf{W}_k, \mathbf{b}_k\}$ and gating parameters $\{\mathbf{W}_{g,k}, \mathbf{b}_{g,k}\}$ (Sec.~\ref{sec:dsam}). These parameters are learnable via backpropagation and are updated at every training step using gradients of the total loss $\mathcal{L}$ (Eq.~\ref{eq:total_loss}).

\paragraph{Data-driven structural components (periodic update).}
This group contains the architectural backbone of the knowledge base; concept prototypes $\{\mathbf{p}_{k,v}, \mathbf{p}_{k,f}\}$ (Sec.~\ref{sec:HierarchicalGating}, \ref{sec:concept_partitioning}) and semantic subspace bases $\{\mathbf{R}_k\}$ (Sec.~\ref{sec:dsam}). These components are not updated via backpropagation but follow a data-dependent schedule:
\begin{itemize}
    \item \textbf{Prototypes} ($\mathbf{p}_k$) are updated via Exponential Moving Average (EMA) whenever a new sample is assigned to their cluster.
    \item \textbf{Subspaces} ($\mathbf{R}_k$) follow a lifecycle based on data accumulation. A subspace is initialized via PCA only after its corresponding cluster buffer accumulates at least \textbf{$N_{\text{min}}$} samples, ensuring sufficient statistics to form valid principal components. Once active, the basis is refined periodically (every \textbf{$N_{\text{refine}}$} steps) using Incremental PCA on buffered features, with residualization across subspaces to maintain approximate orthogonality.
\end{itemize}
This dual-mode update separates rapid, fine-grained learning of ``how to act'' (DSAM parameters) from slower, structural re-organization of ``how to represent concepts'' (subspaces and prototypes), which is crucial for long-term stability and adaptability.

% ================== ADDITIONAL EXPERIMENTAL RESULTS ================== %

\begin{figure}[t]
    \centering
    \includegraphics[width=0.9\linewidth]{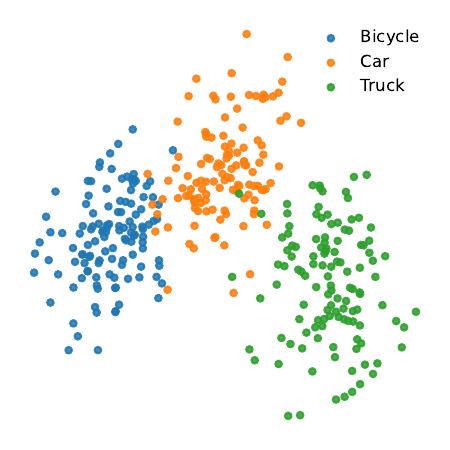}
    \vspace{-2mm}
    \caption{\textbf{Concept-wise subspace visualization.}
    t-SNE of fused representations projected through their assigned semantic subspaces $\mathbf{R}_k^\top \mathbf{h}_f$ for a subset of concepts (e.g., ``car'', ``truck'', ``bicycle''). DSCA yields compact, well-separated clusters, indicating that edits remain confined to localized regions of the representation space.This empirically validates the conceptual illustration provided in Figure \ref{fig:overview}(d) of the main paper.}
    \label{fig:dsca_subspace_tsne}
    \vspace{-3mm}
\end{figure}

% ================== METRICS ================== %
\section{Evaluation Metrics}
\label{sec:eval_metrics}

In this section, we provide the formal definitions of all evaluation metrics referenced in Sec.~\ref{sec:metrics} of the main paper. Let $f_{\theta_0}$ denote the original (unedited) model and $f_{\theta_t}$ the model after $t$ sequential edits. Each edit request is represented as a tuple $e = (v, p, o)$, consisting of a visual input $v$, textual prompt $p$, and desired target output $o$. The set of edits up to step $t$ is denoted by $\mathcal{D}_{e,t}$. We use $\mathbb{I}\{\cdot\}$ for the indicator function.

\subsection{Editing Metrics}

To provide a comprehensive assessment of model editing performance, we evaluate each method along three fundamental axes: Efficacy, Generalization, and Locality. We formalize these metrics below. Please  Let $f_{\theta_0}$ be the original, unedited model and $f_{\theta_t}$ be the model after $t$ sequential edits. An edit request is a tuple $e=(v, p, o)$, consisting of a visual input $v$, a textual prompt $p$, and the desired target output $o$. Let $\mathcal{D}_{e,t}$ be the set of $t$ edits applied to the model. We use $\mathbb{I}\{\cdot\}$ to denote the indicator function, which equals 1 if its condition is true, and 0 otherwise.

\paragraph{Efficacy:}This measures the direct success of the edits.
\begin{itemize}
    \item \textbf{Reliability (Rel.)} quantifies whether the model $f_{\theta_t}$ correctly produces the target output for all edit examples it has been trained on up to step $t$.
\begin{flalign}
\label{eq:reliability}
\text{Rel.} = \mathbb{E}_{(v, p, o) \sim \mathcal{D}_{e,t}} \left[ \mathbb{I}\{ f_{\theta_t}(v, p) = o \} \right]
\end{flalign}
\end{itemize}

\paragraph{Generalization:}This assesses whether the edit was learned as a robust concept, rather than being merely memorized for one specific input.To test this, we evaluate the model on a set of inputs that are semantically equivalent but vary in their presentation. For a given input, we denote this set of variations as $\mathcal{N}(\cdot)$ and samples from it with a subscript $g$.
\begin{itemize}
    \item \textbf{Textual Generalization (T-Gen.)} measures robustness to linguistic variation by evaluating paraphrases ($p_g \in \mathcal{N}(p)$) of the original prompt.
    \begin{equation}
    \label{eq:tgen}
    \text{T-Gen.} = \mathbb{E}_{(v, p, o) \sim \mathcal{D}_{e,t}} \mathbb{E}_{p_g \sim \mathcal{N}(p)} \left[ \mathbb{I}\{ f_{\theta_t}(v, p_g) = o \} \right]
    \end{equation}
    
    \item \textbf{Visual Generalization (V-Gen.)} measures robustness to visual variation by evaluating on images ($v_g \in \mathcal{N}(v)$) that depict the same subject but from different viewpoints or lighting conditions. This metric is also referred to as Modal Generality (M-Gen) in Table~\ref{tab:lifelong}.
    \begin{equation}
    \label{eq:vgen}
    \text{V-Gen.} = \mathbb{E}_{(v, p, o) \sim \mathcal{D}_{e,t}} \mathbb{E}_{v_g \sim \mathcal{N}(v)} \left[ \mathbb{I}\{ f_{\theta_t}(v_g, p) = o \} \right]
    \end{equation}
\end{itemize}

\paragraph{Locality:} This ensures that edits do not degrade performance on unrelated inputs by comparing the behavior of $f_{\theta_t}$ to the original model $f_{\theta_0}$. This is measured using a set of inputs that are unrelated to the edit context. We denote this set as $\mathcal{U}(\cdot)$ and samples from it with a subscript $u$.
\begin{itemize}
    \item \textbf{Textual Locality (T-Loc.)} evaluates on unrelated, text-only inputs ($p_u \in \mathcal{U}(p)$).
\begin{flalign}
\label{eq:tloc}
\text{T-Loc.} =\;
&\mathbb{E}_{(v,p,o) \sim \mathcal{D}_{e,t}}
\mathbb{E}_{p_u \sim \mathcal{U}(p)} \nonumber \\
&\left[
\mathbb{I}\{ f_{\theta_t}(\emptyset, p_u)
= f_{\theta_0}(\emptyset, p_u) \}
\right] &&
\end{flalign}

    \item \textbf{Multimodal Locality (M-Loc.)} evaluates on unrelated multimodal inputs ($(v_u, p_u) \in \mathcal{U}(v, p)$).
\begin{flalign}
\label{eq:mloc}
\text{M-Loc.} =\;
&\mathbb{E}_{(v,p,o) \sim \mathcal{D}_{e,t}} \,
\mathbb{E}_{(v_u,p_u) \sim \mathcal{U}(v,p)} \nonumber \\
&\left[
\mathbb{I}\{ f_{\theta_t}(v_u, p_u)
= f_{\theta_0}(v_u, p_u) \}
\right] &&
\end{flalign}
\end{itemize}

\paragraph{Aggregate Score:}Finally, to summarize overall performance, we compute the average score.
\begin{itemize}
    \item \textbf{Average (Avg.)} provides a single, holistic measure by calculating the arithmetic mean of the five core metrics: Reliability, Textual Generalization, Visual Generalization, Textual Locality, and Multimodal Locality.
\end{itemize}
\begin{table*}[htbp]
    \centering
    \caption{DSCA-specific hyperparameters used in all experiments. Values are kept fixed across editing and continual learning unless otherwise noted.}
    \label{tab:dsca_hparams}
    % \resizebox{\textwidth}{!}{%
    \begin{tabular}{lccccccc}
        \toprule
        Setting &
        $r$ &
        $N_{\text{min}}$ &
        $N_{\text{refine}}$ &
        $\tau$ &
        $\lambda_{\text{align}}$ &
        $\lambda_{\text{distill}}$ &
        $\lambda_{\text{sparse}}$ \\
        \midrule
        LLaVA-1.5-7B\cite{liu2023improvedllava} (editing: E-VQA / E-IC / VLKEB) &
        128 & 32 & 500 & 0.07 & 0.5 & 1.0 & $1\times10^{-2}$ \\
        PaliGemma-3B\cite{beyer2024paligemma} (CoIN continual learning\cite{chen2024coin}) &
        128 & 32 & 500 & 0.07 & 0.5 & 1.0 & $1\times10^{-2}$ \\
        \bottomrule
    \end{tabular}%
    %}
\end{table*}
\subsection{Continual Learning Metrics}

% To evaluate performance on the CoIN benchmark, we adopt standard continual-learning metrics. Let $T$ be the number of tasks. Denote $A_{t,i}$ as the accuracy on task $i$ after training sequentially on tasks $1,\dots,t$, and $\bar{a}_i$ as the initial zero-shot accuracy on task $i$.

% \paragraph{Average Accuracy (ACC).}
% Measures final performance across all tasks after completing the sequence:
% \begin{equation}
% \text{ACC} = \frac{1}{T} \sum_{i=1}^{T} A_{T,i}.
% \end{equation}

% \paragraph{Backward Transfer (BWT).}
% Quantifies how learning new tasks influences performance on past tasks. Negative values indicate forgetting:
% \begin{equation}
% \text{BWT} = \frac{1}{T-1} \sum_{i=1}^{T-1} (A_{T,i} - A_{i,i}).
% \end{equation}

% \paragraph{Forward Transfer (FWT).}
% Measures how knowledge from previous tasks improves performance on future tasks before training on them:
% \begin{equation}
% \text{FWT} = \frac{1}{T-1} \sum_{i=2}^{T} (A_{i-1,i} - \bar{a}_i).
% \end{equation}

% \paragraph{Average Task Accuracy ($A_t$).}
% Captures model plasticity via performance immediately after learning each task:
% \begin{equation}
%  A_t = \frac{1}{T} \sum_{i=1}^{T} A_{i,i}.
% \end{equation}
To evaluate the performance of our method on the CoIN continual learning benchmark, we employ a set of standard metrics designed to measure catastrophic forgetting, knowledge transfer, and plasticity. Below, we provide the formal definitions for these metrics.

Let $T$ be the total number of tasks in the continual learning sequence. We denote $A_{t,i}$ as the accuracy of the model on task $i$ after it has been trained sequentially on tasks $1, 2, \ldots, t$. Consequently, $A_{i,i}$ is the accuracy on task $i$ immediately after training on it, and $\bar{a}_i$ is the initial zero-shot accuracy on task $i$ before any fine-tuning.

\subsubsection{Average Accuracy (ACC)}
Average Accuracy measures the final, overall performance of the model across all tasks after the entire sequence of $T$ tasks has been learned. It provides a single score to summarize how well the model has retained knowledge and learned all tasks by the end.

ACC is calculated by averaging the final accuracies on each task $i$ after the model has been trained on all $T$ tasks.
\begin{equation}
\text{ACC} = \frac{1}{T} \sum_{i=1}^{T} A_{T,i}
\end{equation}

\subsubsection{Backward Transfer (BWT)}
Backward Transfer measures the influence of learning a new task on the performance of previously learned tasks. It is the primary metric for quantifying catastrophic forgetting. A negative BWT value indicates forgetting, while a value close to zero implies knowledge retention.

BWT is calculated as the average difference between the final accuracy on a past task $i$ ($A_{T,i}$) and the accuracy it had immediately after it was first learned ($A_{i,i}$).
\begin{equation}
\text{BWT} = \frac{1}{T-1} \sum_{i=1}^{T-1} (A_{T,i} - A_{i,i})
\end{equation}

\subsubsection{Forward Transfer (FWT)}
Forward Transfer measures the model's ability to generalize from past tasks to improve its learning on future, unseen tasks. It quantifies how much the knowledge gained from learning tasks $1$ to $t-1$ helps the model's performance on the next task $t$ before it is explicitly trained on it. A positive FWT is desirable.

FWT is calculated by averaging the difference between the model's accuracy on a new task $i$ before training on it ($A_{i-1,i}$) and its initial zero-shot accuracy on that same task ($\bar{a}_i$).
\begin{equation}
\text{FWT} = \frac{1}{T-1} \sum_{i=2}^{T} (A_{i-1,i} - \bar{a}_i)
\end{equation}

\subsubsection{Average Task Accuracy ($A_t$)}
Average Task Accuracy assesses the model's plasticity—its ability to effectively learn each new task. It measures the peak performance achieved on each task right after its respective training phase is complete, without considering whether that knowledge is later forgotten.

It is the average of the accuracies on each task $i$ measured immediately after the model has finished training on that task.
\begin{equation}
 A_t = \frac{1}{T} \sum_{i=1}^{T} A_{i,i}
\end{equation}

% ================== IMPLEMENTATION DETAILS ================== %
\section{Implementation Details and Hyperparameters}
\label{sec:implementation_details}

In this section, we detail the experimental setup and hyperparameter configurations used to train and evaluate DSCA. All experiments were conducted using the PyTorch framework on $8\times$ NVIDIA A100 (80GB) GPUs with mixed-precision training.

\vspace{5pt}
\noindent\textbf{Backbone Models.} We apply DSCA to two distinct vision-language architectures to demonstrate generality:
\begin{itemize}
    \item \textbf{LLaVA-1.5-7B \cite{liu2023improvedllava}:} Used for all model editing benchmarks (E-VQA\cite{cheng2023edit}, E-IC\cite{cheng2023edit}, VLKEB\cite{huang2024vlkeb}).
    \item \textbf{PaliGemma-3B \cite{beyer2024paligemma}:} Used for the CoIN continual learning benchmark\cite{chen2024coin}.
\end{itemize}

\vspace{5pt}
\noindent\textbf{Hyperparameter Configuration.} To ensure reproducibility and fair comparison, we maintain a fixed set of hyperparameters across all benchmarking experiments. These specific values are listed in \textbf{Table \ref{tab:dsca_hparams}}. 

\noindent Key hyperparameters include:
\begin{itemize}
    \item \textbf{Subspace Rank ($r$):} The dimensionality of the semantic subspaces ($r=128$). (Sec.\ref{sec:dsam})
    \item \textbf{Subspace Initialization Threshold ($N_{\text{min}}$):} The minimum number of samples a concept cluster must accumulate before its semantic subspace basis $\mathbf{R}_k$ is initialized via PCA and the corresponding DSAM becomes active ($N_{\text{min}}=32$). (Algo.\ref{alg:procedures}, Sec.\ref{sec:dual_mode})
    \item \textbf{Refinement Interval ($N_{\text{refine}}$):} The frequency of Incremental PCA updates for the semantic bases ($N_{\text{refine}}=500$ steps).(Algo.\ref{alg:main_loop}, Sec.\ref{sec:dual_mode})
    \item \textbf{Routing Temperature ($\tau$):} Controls the sharpness of the routing distribution ($\tau=0.07$). (Sec.\ref{sec:HierarchicalGating})
    \item \textbf{Loss Coefficients ($\lambda$):} We balance the objective using $\lambda_{\text{align}}=0.5$ (cross-modal alignment), $\lambda_{\text{distill}}=1.0$ (contrastive distillation), and $\lambda_{\text{sparse}}=1 \times 10^{-2}$ (gate sparsity). (Eq.\ref{eq:total_loss}). This value of $\lambda_{\text{sparse}}=1 \times 10^{-2}$ is reflected as the
    Operating Point in Graph \ref{fig:routsparse}(b)
\end{itemize}
Unless otherwise noted in the specific experimental subsection, these values remain constant throughout the lifelong editing and continual learning processes.

\section{Extended Experimental Results}
\label{sec:extended_results}

We provide expanded comparisons against a wider range of baselines in Tables~\ref{tab:Dualedit-extended}, \ref{tab:lifelong_extended}, and \ref{tab:PAM_extended}.

\subsection{Expanded Single-Edit Performance }
Table \ref{tab:Dualedit-extended} provides a comprehensive single edit success comparison on the E-VQA and E-IC benchmarks. 

    All baseline numbers, including standard fine-tuning variants and retrieval-based methods, are sourced from \textbf{DualEdit \cite{shi2024dualedit}}.
    Results for \textbf{FT-V} (Cheng et al., 2023), \textbf{FT-L} (Cheng et al., 2023), \textbf{KE} (De Cao et al., 2021), \textbf{IKE} (Li et al., 2023), \textbf{TP} (Huang et al., 2023), and \textbf{VisEdit}\cite{chen2025attribution} should be cross-referenced with the experiment tables provided in \textbf{DualEdit \cite{shi2024dualedit}}.

\subsection{Expanded Lifelong Editing Performance }
Table \ref{tab:lifelong_extended} details performance after $t=1,000$ sequential edits on LLaVA-1.5-7B.

     All baseline numbers are sourced from \textbf{LiveEdit \cite{chen2025lifelong}}.
Results for \textbf{RECIPE}(Chen et al. 2024), \textbf{LEMoE}(Wang et al. 2024), \textbf{FT-M} (Cheng et al. 2023), and \textbf{FT-L} (Cheng et al. 2023) are taken directly from the Sequential Editing benchmarks reported in \textbf{LiveEdit \cite{chen2025lifelong}}.

\subsection{Expanded Continual Learning on CoIN }
Table \ref{tab:PAM_extended} reports results on the CoIN benchmark using the PaliGemma-3B backbone\cite{beyer2024paligemma}.All baseline numbers are sourced from \textbf{PAM \cite{sokar2025continual}}.
To establish performance bounds, we include three foundational setups defined in their work:

\begin{itemize}
    \item \textbf{Zero-shot:} Evaluates the pre-trained PaliGemma base model directly without further training. This assesses the model's inherent generalization capabilities prior to task-specific adaptation.
    \item \textbf{Independent:} Fine-tunes a separate LoRA module for each task individually. This serves as an upper bound for task-specific performance by completely isolating tasks to prevent interference, though it requires maintaining distinct parameters for every task.
    \item \textbf{Multitask:} Trains a single LoRA module on all tasks concurrently. This represents the theoretical upper bound for a single shared model by assuming simultaneous access to all datasets, though it violates the sequential data constraints of Continual Learning.
\end{itemize}
Results for \textbf{LwF} (Li and Hoiem 2017), \textbf{I-LoRA} (Li et al. 2025), \textbf{O-LoRA} (Wang et al. 2023), \textbf{MoELoRA}\cite{chen2024coin}, and \textbf{MagMax \cite{marczak2024magmax}} are cited as reported in the Experiments section of \textbf{PAM \cite{sokar2025continual}}. Please refer to \textbf{PAM \cite{sokar2025continual}} for the specific implementation details of these continual learning baselines.
\FloatBarrier
\begin{table*}[t]
\centering
\caption{Extended Single Edit success comparision. Baselines from DualEdit\cite{shi2024dualedit}}
\label{tab:Dualedit-extended}
\begin{tabular}{l rrrrrr rrrrrr}
\toprule
\multirow{2}{*}{\textbf{Methods}} & \multicolumn{6}{c}{\textbf{E-VQA\cite{ cheng2023edit}}} & \multicolumn{6}{c}{\textbf{E-IC\cite{ cheng2023edit}}} \\
\cmidrule(lr){2-7} \cmidrule(lr){8-13}
& \textbf{Rel.} & \textbf{T-Gen.} & \textbf{V-Gen.} & \textbf{T-Loc.} & \textbf{M-Loc.} & \textbf{Avg.} & \textbf{Rel.} & \textbf{T-Gen.} & \textbf{V-Gen.} & \textbf{T-Loc.} & \textbf{M-Loc.} & \textbf{Avg.} \\
\midrule
\multicolumn{13}{l}{\textit{LLaVA-V1.5 \cite{liu2023improvedllava}}} \\
FT-V & 31.68 & 29.96 & 26.68 & 100.00 & 91.23 & 55.91 & 52.85 & 51.57 & 48.63 & 100.00 & 92.55 & 69.12 \\
FT-L & 31.78 & 30.02 & 26.91 & 99.94 & 92.03 & 56.14 & 53.00 & 51.02 & 49.29 & 98.91 & 94.89 & 69.42 \\
KE  & 85.86 & 84.00 & 82.23 & 93.57 & 73.06 & 83.74 & 83.54 & 82.15 & 81.12 & 92.46 & 73.83 & 82.62 \\
IKE & 91.35 & 90.84 & 91.08 & 60.18 & 51.08 & 76.91 & 93.72 & 88.37 & 76.99 & 76.60 & 64.90 & 80.12 \\
SERAC \cite{mitchell2022memory} & 82.51 & 81.60 & 80.05 & 100.00 & 57.48 & 80.33 & 43.08 & 42.37 & 42.85 & 100.00 & 7.63 & 47.19 \\
MEND \cite{mitchell2022fast} & 92.30 & 92.16 & 92.10 & 90.30 & 81.13 & 89.60 & 93.76 & 93.46 & 92.14 & 91.60 & 87.59 & 91.71 \\
TP & 38.68 & 36.27 & 31.26 & 95.31 & 91.41 & 58.59 & 59.07 & 57.01 & 55.51 & 64.79 & 89.26 & 65.13 \\
LTE \cite{jiang2024learning} & 94.16 & 93.54 & 93.06 & 83.76 & 81.65 & 89.23 & 93.60 & 92.38 & 91.18 & 85.54 & 88.49 & 90.24 \\
VisEdit \cite{chen2025attribution} & 95.78 & 94.21 & 94.37 & 100.00 & 91.11 & 95.09 & 95.06 & 94.87 & 94.35 & 100.00 & 95.23 & 95.90 \\
DualEdit \cite{shi2024dualedit}& 96.94 & 96.43 & 96.20 & 100.00 & 99.61 & 97.84 & 96.76 & 96.52 & 96.24 & 100.00 & 99.74 & 97.85 \\
DSCA (Ours) & \textbf{98.12} & \textbf{97.30} & \textbf{97.25} & \textbf{100.00} & \textbf{99.83} & \textbf{98.50} & \textbf{98.00} & \textbf{97.10} & \textbf{97.02} & \textbf{100.00} & \textbf{99.90} & \textbf{98.00} \\
\bottomrule
\end{tabular}%
\end{table*}

\begin{table*}[t]
\centering
\caption{Expanded Lifelong editing results ($t=1000$ edits) on LLaVA-1.5-7B \cite{liu2023improvedllava}. Baselines from LiveEdit~\cite{chen2025lifelong}.}
\label{tab:lifelong_extended}
\begin{tabular}{l rrrrrr rrrrrr}
\toprule
\multirow{2}{*}{\textbf{Methods}} & \multicolumn{6}{c}{\textbf{E-VQA\cite{ cheng2023edit}}} & \multicolumn{6}{c}{\textbf{VLKEB}\cite{huang2024vlkeb}} \\
\cmidrule(lr){2-7} \cmidrule(lr){8-13}
& \textbf{Rel.} & \textbf{T-Gen.} & \textbf{V-Gen.} & \textbf{T-Loc.} & \textbf{M-Loc.} & \textbf{Avg.} & \textbf{Rel.} & \textbf{T-Gen.} & \textbf{V-Gen.} & \textbf{T-Loc.} & \textbf{M-Loc.} & \textbf{Avg.} \\
\midrule
\multicolumn{13}{l}{\textit{LLaVA-V1.5 \cite{liu2023improvedllava}}} \\
LiveEdit\cite{chen2025lifelong} & 92.93 & 90.16 & 84.30 & 100.00 & 96.43 & 92.76 & 92.22 & 83.97 & 82.75 & 100.00 & 100.00 & 91.79 \\
LTE \cite{jiang2024learning}    & 83.93 & 82.55 & 81.34 & 83.97 & 73.09 & 80.98 & 64.51 & 56.26 & 64.80 & 80.85 & 76.52 & 68.59 \\
MEND\cite{mitchell2022fast}     & 0.04  & 0.05  & 0.05  & 0.08   & 0.09  & 0.06  & 0.03  & 0.05  & 0.07  & 0.06   & 0.08   & 0.06 \\
SERAC \cite{mitchell2022memory} & 85.57 & 75.58 & 82.01 & 62.46  & 15.69 & 64.26 & 60.93 & 56.49 & 60.06 & 52.94  & 15.04  & 49.09 \\
FT-L  & 71.39 & 59.83 & 57.41 & 55.55 & 48.99 & 58.63  & 68.14 & 66.38 & 66.98 & 65.61 & 75.35 & 68.49  \\
FT-M  & 69.57 & 56.34 & 44.07 & 100.00 & 41.47 & 62.29 & 53.41 & 48.80 & 43.16 & 100.00 & 57.03 & 60.48 \\
TP & 16.56 & 16.80 & 15.65 & 7.28 & 15.60 & 14.38  & 5.46 & 4.81 & 5.51 & 2.77 & 7.19 & 5.15 \\
RECIPE  & 87.00 & 76.81 & 83.09 & 86.95 & 87.03 & 84.18  & 62.00 & 56.84 & 61.50 & 85.37 & 82.07 & 69.56  \\
LEMoE & 30.80 & 25.75 & 24.32 & 71.45 & 46.23 & 39.71  & 67.97 & 61.07 & 58.16 & 48.48 & 44.06 & 55.95  \\
\textbf{DSCA (ours)}            & \textbf{96.85} & \textbf{93.10} & \textbf{88.00} & \textbf{100.00} & \textbf{98.20} & \textbf{95.23} & \textbf{98.10} & \textbf{93.80} & \textbf{89.70} & \textbf{100.00} & \textbf{100.00} & \textbf{96.72} \\
\bottomrule
\end{tabular}%

\end{table*}
\begin{table*}[t]
    \centering
    \caption{Expanded table for perfomance on COIN benchmark\cite{chen2024coin}. Higher is better for all metrics; less negative BWT indicates less forgetting.Baselines are taken from PAM\cite{sokar2025continual}}
    \label{tab:PAM_extended}
% \resizebox{\textwidth}{!}{%
    \begin{tabular}{lcccc}
        \toprule
        \textbf{METHOD} & \textbf{ACC} & \textbf{BWT} & \textbf{FWT} & \textbf{$A_t$} \\
        \midrule
        ZERO-SHOT   & 24.74          & -              & -              & -              \\
        INDEPENDENT & 76.46          & -              & -              & -              \\
        MULTITASK   & 73.93          & -              & -              & -              \\
        \midrule
        FINE-TUNE   & 43.36$\pm$8.18 & -39.51$\pm$9.87 & 7.71$\pm$2.51  & 76.29$\pm$0.18 \\
        LWF       & 47.15$\pm$3.52 & -33.66$\pm$3.84 & 9.67$\pm$1.22  & 75.20$\pm$0.32 \\
        I-LoRA    & 42.11$\pm$6.55 & -40.95$\pm$8.13 & 7.89$\pm$2.60  & 76.24$\pm$0.35 \\
         O-LoRA     & 46.53$\pm$6.88 & -32.08$\pm$7.87 & 9.45$\pm$3.02  & 73.27$\pm$1.01 \\
        MoELoRA \cite{chen2024coin}    & 46.59$\pm$9.98 & -36.40$\pm$11.97& 7.79$\pm$2.24  & \textbf{76.93$\pm$0.27} \\
        MagMax\cite{marczak2024magmax}      & 45.74$\pm$0.88 & -22.68$\pm$6.51 & 4.75$\pm$3.36  & 76.29$\pm$0.18 \\
        PAM\cite{sokar2025continual}  & 49.89$\pm$1.66 & -19.45$\pm$0.95 & \textbf{11.11$\pm$0.09} & 76.31$\pm$0.03 \\
        DSCA (ours,with PaliGemma-3B\cite{beyer2024paligemma}) & \textbf{49.96$\pm$0.72} & \textbf{-9.37$\pm$1.02} & 11.04$\pm$0.13 & 76.48$\pm$0.07 \\
        \bottomrule
    \end{tabular}
\end{table*}

\end{document}